\makeatletter
\@ifundefined{dcreviewbuild}{\documentclass[preprint,12pt]{elsarticle}}{\documentclass{elsarticle}}
\makeatother

\usepackage{amsmath}
\usepackage{amssymb}
\usepackage{amsthm}
\usepackage{graphicx}
\graphicspath{{../figures/}{./}}
\usepackage{booktabs}
\usepackage[colorlinks=true,allcolors=blue]{hyperref}

\newtheorem{proposition}{Proposition}
\newtheorem{theorem}{Theorem}
\newtheorem{lemma}{Lemma}
\newtheorem{corollary}{Corollary}
\newtheorem{conjecture}{Conjecture}
\theoremstyle{definition}

\newcommand{\NN}{n}
\newcommand{\dep}{L}
\newcommand{\lay}{\ell}
\newcommand{\jet}[1]{\partial_x^{#1}}
\newcommand{\chio}{\chi_1}
\newcommand{\Sig}{\Sigma}
\newcommand{\Ex}{\mathbb{E}}
\newcommand{\Cov}{\operatorname{Cov}}

\newcommand{\poleValidRange}{|\chi_1-1|\le0.01}

\newcommand{\sigwSq}{1.760955}
\newcommand{\sigbSq}{0.05}
\newcommand{\qstar}{0.570048}
\newcommand{\chiTwo}{0.568531}
\newcommand{\chiThree}{2.262727}

\newcommand{\closMaxDevSmallN}{14.79\%}   
\newcommand{\closMaxDevMidN}{3.94\%}      
\newcommand{\closMaxDevLargeN}{2.90\%}    
\newcommand{\expVtwo}{0.6457}
\newcommand{\expVthree}{0.6136}
\newcommand{\expCthirteen}{0.4019}

\newcommand{\bcTerm}{-39.65}
\newcommand{\bcTotal}{134.53}
\newcommand{\bcShare}{29\%}
\newcommand{\aTwoTol}{0.1\%}
\newcommand{\gaussTol}{0.01}
\newcommand{\isserlisA}{1.0002}
\newcommand{\isserlisB}{1.0002}
\newcommand{\isserlisC}{1.0002}
\newcommand{\sigmaOneZero}{+1.3\times10^{-4}}
\newcommand{\rhoTwo}{0.9992}
\newcommand{\rhoOneThree}{1.0001}
\newcommand{\rhoThree}{0.9998}

\newcommand{\slopeOne}{3.6\times10^{-5}\pm5.1\times10^{-5}}
\newcommand{\slopeOneSig}{0.71\sigma}
\newcommand{\slopeTwo}{0.1719\pm0.0089}
\newcommand{\predTwo}{0.1700\pm0.0066}
\newcommand{\devTwo}{0.17\sigma}
\newcommand{\coefThree}{0.2475\pm0.0352}

\newcommand{\predThree}{0.2288\pm0.0134}
\newcommand{\devThree}{0.50\sigma}
\newcommand{\dRatioTol}{5\%}

\newcommand{\plainGrowTwo}{23}
\newcommand{\plainGrowThree}{590}
\newcommand{\plainGrowFour}{1.8\times10^{4}}
\newcommand{\resGrowTwo}{1.39}
\newcommand{\resGrowThree}{1.70}
\newcommand{\resGrowFour}{2.22}

\newcommand{\seedsClosure}{64}
\newcommand{\closureBand}{4\%}

\newcommand{\plateauBand}{3\%}
\newcommand{\driftDelta}{1.1\times10^{-4}}
\newcommand{\driftMismatch}{1.3\sigma}

\newcommand{\zenodoDOI}{10.5281/zenodo.21873633}
\newcommand{\zenodoURL}{https://doi.org/10.5281/zenodo.21873633}

\newcommand{\widthMain}{2^{16}}
\newcommand{\widthMax}{2^{18}}
\newcommand{\depthMain}{256}
\newcommand{\seedsMain}{16}
\newcommand{\fitLo}{64}

\newcommand{\widthAux}{2^{14}}
\newcommand{\seedsAux}{12}

\newcommand{\loglogKtwo}{0.85}
\newcommand{\loglogKthree}{1.80}
\newcommand{\loglogKfour}{2.63}

\newcommand{\vOneAtOne}{1.76}
\newcommand{\vOneAtFour}{0.34}

\newcommand{\lamTransientWindow}{5\times10^{-19}}

\journal{Neural Networks}

\begin{document}

\begin{frontmatter}

\title{Critical initialization destabilizes higher input derivatives in wide scalar-input networks}

%
%
\author[inst1]{Prashant Singh\corref{cor1}\fnref{eq}}
\ead{2023mcb1309@iitrpr.ac.in}
\author[inst1]{Pranav Singh\fnref{eq}}
\ead{2023mcb1308@iitrpr.ac.in}

\affiliation[inst1]{organization={Department of Mathematics, Indian Institute of
                                  Technology Ropar},
                    city={Rupnagar},
                    state={Punjab},
                    postcode={140001},
                    country={India}}

\cortext[cor1]{Corresponding author.}
\fntext[eq]{Both authors contributed equally to this work.}

%
\begin{abstract}
The edge-of-chaos condition preserves first-order input perturbations in wide randomly
initialized networks, but physics-informed losses, score matching and derivative
regularization depend on higher input derivatives.
For smooth scalar-input fully connected networks, using a joint Gaussianity of the finite
derivative jet that holds in the infinite-width limit at each fixed depth, we derive
mean-field recursions through third order that are exact at the variance fixed point, with
finite-depth corrections that decay geometrically. At
criticality, the first-derivative variance is depth-invariant, whereas the
second-derivative variance grows linearly whenever the activation has nonzero curvature.
The resulting third-order system closes on mean-field susceptibilities. For residual
networks with branch scale $L^{-1/2}$, we prove that every fixed finite derivative order
has uniformly bounded variance under explicit regularity assumptions. Simulations verify
the critical growth laws, the residual bound, and the closed recursion. The results concern
initialization, not trained-network performance.
\end{abstract}

\begin{keyword}
deep neural networks \sep initialization \sep mean-field theory \sep edge of chaos \sep
signal propagation \sep physics-informed neural networks
\end{keyword}

\end{frontmatter}

\section{Introduction}
\label{sec:intro}

Physics-informed neural networks minimize a differential-equation residual and therefore
depend on input derivatives of the network \citep{raissi2019}. The same is true of implicit
neural representations regularized by their gradients \citep{sitzmann2020}, and of explicit
score matching \citep{hyvarinen2005}, whose objective contains the Laplacian of the model
log-density and so depends on second input derivatives. Denoising and sliced variants, and
the score parameterization of Song and Ermon~\citep{song2019}, model the score directly and
require only first-order derivatives; by the results below, that is the order which is
structurally well behaved. Derivative conditioning in these settings is not determined by
output variance alone.

These networks do not improve with depth in the way ordinary feedforward networks do.
Accuracy degrades as layers are added, and Wang et al.~\citep{wang2024pirate} attribute the
degradation to initialization rather than to optimization, reporting that standard schemes
leave the network's derivatives poorly conditioned before training begins. Their remedy is
architectural: a residual connection whose branches are attenuated so that the network
begins close to shallow and deepens as training proceeds.

Mean-field initialization theory identifies the edge of chaos, where the first-order
susceptibility equals one and first-order input perturbations are preserved through depth
\citep{poole2016,schoenholz2017}. This criterion does not specify the propagation of
second and higher input derivatives. That gap matters for derivative-dependent objectives
and motivates the present analysis.

For smooth fully connected networks with one input, we propagate the covariance of the
input-derivative jet at initialization. Under a joint-Gaussian jet assumption in the
infinite-width limit, we obtain a closed recursion through third order. The recursion shows
that criticality keeps the first-derivative variance constant but makes the
second-derivative variance grow linearly whenever the activation has nonzero curvature.
It also gives quadratic growth at third order. For residual layers with branch scale
$L^{-1/2}$, we prove uniform boundedness at every fixed finite derivative order under
bounded-derivative assumptions.

This is complementary to the construction of Wang et al.~\citep{wang2024pirate} rather than
an account of it. Their residual scalar is trainable and initialized to zero, so the network
is an exact identity map at initialization and recovers depth during training; the scaling
analyzed here is fixed and does not change. What the two share is that both attenuate the
residual branch at initialization, and the analysis below gives a mean-field account of why
attenuation at that point is what the derivative variances require.

Two boundaries are worth stating at the outset. Everything here concerns the network at
initialization, before any training step; the connection to trained accuracy runs through
conditioning at initialization, which is the mechanism Wang et al.~\citep{wang2022ntk}
analyze for physics-informed losses, and we do not extend it. And the derivation assumes a
smooth activation, which the results of Section~\ref{sec:theory} show is not incidental.

The analysis establishes no ranking of optimization methods, and its direct conclusion is
architectural: independent layerwise initialization cannot simultaneously preserve a nonzero
second input derivative and stabilize its variance at criticality, whereas depth-scaled
residual branches avoid this variance growth. We validate the recursions and residual
prediction with finite-width simulations.

\section{Related work}
\label{sec:related}

\paragraph{Mean-field theory of initialization.}
The framework this paper builds on originates with Poole et al.~\citep{poole2016} and
Schoenholz et al.~\citep{schoenholz2017}, who showed that a randomly initialized deep network admits a
deterministic description in the wide limit: the preactivation variance obeys a
one-dimensional map with a fixed point, and the correlation between two inputs obeys a
second map whose stability determines how far information propagates. The critical setting
of the weight and bias variances is where the derivative of that second map equals one, and
initializing there maximizes trainable depth. Yang and Schoenholz~\citep{yang2017} extended the analysis to
residual architectures, showing that skip connections widen the critical region, and
Hayou et al.~\citep{hayou2019} used it to compare activation functions.

These derivations are complete for the setting they address. A network trained on a loss
that depends on its output requires the output and its first derivative with respect to the
input, the latter because it controls how gradients traverse depth, and both are what the
framework follows. The present paper extends the same machinery to the higher input
derivatives that a different class of loss requires. The object it propagates, the jet
covariance of \eqref{eq:sigma}, contains the two quantities above as its first entries.

\paragraph{Physics-informed networks and their training behavior.}
Raissi et al.~\citep{raissi2019} introduced the formulation in which a differential-equation residual
serves as the training objective. Wang et al.~\citep{wang2021gradient} identify pathologies in the
resulting gradient flow, and Wang et al.~\citep{wang2022ntk} analyze the optimization
through the neural tangent kernel \citep{jacot2018}, showing that the residual and boundary
components of the kernel can differ by orders of magnitude and that this imbalance explains
several observed training failures. That analysis concerns the kernel at initialization and
its consequences for convergence; the present paper concerns the variances of the network's
input derivatives at initialization, which is the quantity entering that kernel rather than
the kernel itself.

Wong et al.~\citep{wong2022sinusoidal} likewise locate a training failure at initialization
rather than in the optimizer. They report that a network of increasing expressiveness is
biased towards flat output functions at initialization, which can come close to satisfying a
differential-equation residual while remaining far from the solution, and their remedy
raises the variability of the input gradient by mapping the input into a sinusoidal space.
That remedy acts on the input encoding, and the derivative order it addresses is the first.
The recursion analyzed here acts through depth and concerns orders two and above, so the two
accounts bear on different orders of the same jet.

Wang et al.~\citep{wang2024pirate} report that physics-informed networks degrade as depth
increases and attribute this to initialization. Their remedy is a residual connection
carrying a trainable scalar initialized to zero, so that each block is an exact identity map
at initialization and the network recovers depth during training, together with a
least-squares initialization of the final layer. This is mechanistically distinct from the
fixed depth-dependent attenuation analyzed in Section~\ref{sec:theory-residual}, which never
changes. The two agree in attenuating the residual branch at initialization and differ in
whether that attenuation is subsequently learned.

\paragraph{Residual branch scaling.}
The exponent $\gamma = 1/2$ is not new, and its properties at orders zero and one are
established. Hayou et al.~\citep{hayou2021stable} show that scaling each residual branch by
$\dep^{-1/2}$ keeps the output variance bounded where an unscaled network gives variance
growing linearly in depth. Zhang et al.~\citep{zhang2019sharp} prove the threshold is sharp: the forward
process is unbounded once the branch scale exceeds $\dep^{-1/2+c}$ for any $c>0$.
Marion et al.~\citep{marion2025scaling} show that $\dep^{-1/2}$ is the distinguished exponent in the
large-depth limit under independent initialization, other choices giving either explosion or
an identity map. Architecturally, Bachlechner et al.~\citep{bachlechner2021rezero} obtain a related effect with
a trainable residual scalar initialized to zero, which is the construction
Wang et al.~\citep{wang2024pirate} adopt.

What Section~\ref{sec:theory-residual} adds is not the exponent but its reach: the same
single choice $\gamma=1/2$ bounds every input-derivative order simultaneously, by an
argument in which the order of the derivative does not appear.

\paragraph{Adjacent questions about higher derivatives.}
Two lines of work involve higher derivatives of neural networks and are worth distinguishing
by what each sets out to do.

Chickering~\citep{ntangentprop2024} addresses the cost of evaluating higher-order derivatives of a deep
network, giving an algorithm quasilinear in the order. That is the computational question:
how expensive the derivatives are to obtain. The question here is a statistical one, namely
how their variances behave as depth grows, and the two are independent. A practitioner needs
both.

Martens et al.~\citep{martens2021dks} shape the kernel map of a deep network by modifying activation
functions, and their analysis differentiates that map with respect to the correlation
between two inputs. The derivatives they take are of the kernel with respect to its
argument; the derivatives here are of the network with respect to its input. Both are
higher-order quantities in a mean-field analysis, and they are different objects, so the
results do not overlap.

\paragraph{Correlator hierarchies through depth.}
Hanin~\citep{hanin2022hierarchies} develops a perturbative hierarchy for the joint cumulants
of a random network and its derivatives, and Roberts et al.~\citep{roberts2022principles} propagate
preactivation correlators together with the neural tangent kernel and its parameter-space
derivatives, with finite-width corrections organized in powers of $1/\NN$. Both are
hierarchies of correlators through depth, and the machinery is close to what is used here.

The scope of the first is worth stating precisely, because it is broader than the recursions
drawn from it. Its cumulant estimates are stated for differential operators in the input of
any order the smoothness of the activation permits, so higher input derivatives lie inside
that framework rather than outside it. What the estimates supply for such an operator is its
order of magnitude in $1/\NN$ at fixed depth, with the layer index carried in the implicit
constant. The explicit layerwise recursions derived from them cover the network value and
its first input derivatives, and the large-depth conclusion drawn from them concerns
gradients with respect to first-layer weights. Neither line of work takes the depth
dependence of a higher input derivative's variance as its object, and that is the quantity
propagated here: the covariance of $\jet{k}h^{\lay}$ for $k$ up to three, at fixed width.

To our knowledge, no prior work isolates the input-derivative correlators $\jet{k}h^{\lay}$
with $k\ge2$ as the propagated object and derives their depth-growth law at criticality, and
we have found no prior notice of the parity failure described in
Section~\ref{sec:method-parity}. Yang and Schoenholz~\citep{yang2017} establish polynomial depth behavior at
criticality for activations and gradients, and the growth law below is in that spirit,
specialized to input derivatives. The search behind these statements covered the mean-field
and neural-tangent-kernel literatures and forward citations of the works above; it was not
exhaustive, which is why the claims are stated in this form.

\section{Setup}
\label{sec:setup}

A fully connected network of width $\NN$ and depth $\dep$ maps a scalar input $x$ to
preactivations $h^\lay \in \mathbb{R}^\NN$ through
\begin{equation}
\label{eq:net}
h^{\lay} = W^{\lay}\phi\!\left(h^{\lay-1}\right) + b^{\lay},
\qquad
W^{\lay}_{ij}\sim\mathcal N\!\left(0,\ \sigma_w^2/\NN\right),
\quad
b^{\lay}_i\sim\mathcal N\!\left(0,\ \sigma_b^2\right),
\end{equation}
with the activation $\phi$ applied componentwise and all weights and biases drawn
independently at initialization. The first layer has fan-in equal to the input dimension
rather than $\NN$, so $W^1_{i}\sim\mathcal N(0,\sigma_w^2)$. Throughout, $\phi=\tanh$
unless stated otherwise, and every statement refers to the network at initialization,
before any training step. The analytical statements below concern fixed finite depth and
fixed finite derivative order as $\NN\to\infty$.

\subsection{Mean-field variance propagation}
\label{sec:setup-mf}

In the limit $\NN\to\infty$ the preactivations at each layer become Gaussian with zero
mean, and their variance $q^\lay = \Ex[(h^\lay_i)^2]$ obeys a deterministic recursion
\citep{poole2016,schoenholz2017}
\begin{equation}
\label{eq:qmap}
q^{\lay} = \sigma_w^2\,\Ex_{z\sim\mathcal N(0,q^{\lay-1})}\!\left[\phi(z)^2\right] + \sigma_b^2 ,
\end{equation}
which has a fixed point $q^\ast$. We call the quantities
\begin{equation}
\label{eq:chi}
\chi_m := \sigma_w^2\,\Ex_{z\sim\mathcal N(0,q^\ast)}\!\left[\phi^{(m)}(z)^2\right],
\qquad m\ge 1,
\end{equation}
the \emph{order-$m$ susceptibilities}. Only $\chi_1$ appears in the classical theory. It
governs how a perturbation of the input propagates: the variance of the input-output
Jacobian is multiplied by $\chi_1$ at each layer, so it grows without bound when
$\chi_1>1$, decays to zero when $\chi_1<1$, and is preserved when $\chi_1=1$. The
condition
\begin{equation}
\label{eq:eoc}
\chi_1 = 1
\end{equation}
defines the \emph{edge of chaos}, the standard prescription for choosing
$(\sigma_w^2,\sigma_b^2)$ so that first-order perturbations neither vanish nor explode.
It is also closely related to stable backward-signal propagation in the classical
mean-field analysis. For $\tanh$ with $\sigma_b^2=\sigbSq$ this gives
$\sigma_w^2=\sigwSq$, $q^\ast=\qstar$, and $\chi_2=\chiTwo$, $\chi_3=\chiThree$.

\subsection{Jets and the jet covariance}
\label{sec:setup-jets}

The quantity a loss actually sees is the scalar network output. We take the standard
independent linear readout
\begin{equation}
\label{eq:readout}
u(x) := \sum_{i=1}^{\NN} a_i\,\phi\!\left(h^{\dep}_i(x)\right),
\qquad
a_i\sim\mathcal N\!\left(0,\ \sigma_a^2/\NN\right)
\ \text{independent of all }W^\lay,b^\lay .
\end{equation}

A loss that constrains a differential operator of order $k$ depends on $\jet{k}u$, not on
$u$ alone. The objects that must propagate are therefore the input derivatives of the
preactivations. We write
\begin{equation}
\label{eq:jetdef}
h^\lay_{(j)} := \jet{j} h^\lay \in \mathbb{R}^{\NN},
\qquad j = 0,1,\dots,K,
\end{equation}
and call the collection $\left(h^\lay_{(0)},\dots,h^\lay_{(K)}\right)$ the \emph{$K$-jet}
of the layer. Index $j=0$ is the preactivation itself. The quantity this paper tracks is
the \emph{jet covariance}
\begin{equation}
\label{eq:sigma}
\Sig^\lay_{jk} := \Ex\!\left[h^\lay_{(j),i}\,h^\lay_{(k),i}\right],
\qquad j,k \in \{0,1,\dots,K\},
\end{equation}
whose diagonal entries $v^\lay_k := \Sig^\lay_{kk}$ are the derivative variances and whose
off-diagonal entries record the correlations between derivative orders. We abbreviate
$c_{jk} := \Sig_{jk}$ for $j,k\ge 1$ and $\sigma_{0k} := \Sig_{0k}$.

Including $j=0$ is not a matter of convenience. The recursion for $\Sig$ contains terms
of the form $\Ex[g(h_{(0)})\,h_{(i)}h_{(j)}h_{(k)}]$ in which a function of the
preactivation multiplies a product of jets, and the couplings $\sigma_{0k}$ are what
prevent those terms from vanishing. Section~\ref{sec:theory} shows that omitting row and
column $0$ makes several such terms evaluate to zero when they are not.

The layer map \eqref{eq:net} propagates the jet by the chain rule. Writing
$g^{\lay-1}_{(j)} := \jet{j}\!\left[\phi(h^{\lay-1})\right]$, we have
$h^\lay_{(j)} = W^\lay g^{\lay-1}_{(j)}$ for $j\ge1$, with the bias contributing only at
$j=0$, and the $g_{(j)}$ follow from Fa\`a di Bruno's formula. For $j\le 3$,
\begin{align}
\label{eq:faa}
g_{(0)} &= \phi(h_{(0)}), \nonumber\\
g_{(1)} &= \phi'\,h_{(1)}, \nonumber\\
g_{(2)} &= \phi''\,h_{(1)}^2 + \phi'\,h_{(2)}, \\
g_{(3)} &= \phi'''\,h_{(1)}^3 + 3\phi''\,h_{(1)}h_{(2)} + \phi'\,h_{(3)}, \nonumber
\end{align}
with every $\phi^{(m)}$ evaluated at $h_{(0)}$ and all products taken componentwise. Since
$W^\lay$ is independent of layer $\lay-1$ and has independent zero-mean entries of
variance $\sigma_w^2/\NN$,
\begin{equation}
\label{eq:sigmap}
\Sig^{\lay}_{jk} = \sigma_w^2\,\Ex\!\left[g^{\lay-1}_{(j)}\,g^{\lay-1}_{(k)}\right]
\qquad (j,k\ge1),
\end{equation}
exactly as an expectation over the random network, for any width. The closed recursions
derived below additionally use the mean-field and Gaussian-jet assumptions. Equation~\eqref{eq:sigmap} is the object of
Section~\ref{sec:theory}: everything reduces to evaluating expectations of products of
the Fa\`a di Bruno terms \eqref{eq:faa}.

\subsection{Assumptions}
\label{sec:setup-assumptions}

The closed recursions use one additional mean-field hypothesis.

\begin{description}
  \item[(A1)] The jet $\left(h^\lay_{(0)},\dots,h^\lay_{(K)}\right)$ is jointly Gaussian
    at initialization.
\end{description}

A1 is the jet analogue of the standard mean-field statement, which asserts Gaussianity for
$j=0$ alone. It is not implied by that statement, since $h_{(2)}$ and $h_{(3)}$ are built
from products through \eqref{eq:faa}. It is, however, available in the literature rather
than being a hypothesis this paper must postulate.

\begin{theorem}[A1 holds in the infinite-width limit at each fixed depth]
\label{prop:gaussianjet}
Fix $K$ and a depth $\lay$, and let $\phi$ be $K$ times differentiable with polynomially
bounded $K$th derivative. Then as the hidden widths tend to infinity with $\lay$ fixed, the
finite-dimensional distributions of $x\mapsto h^\lay(x)$ together with its input derivatives
through order $K$ converge to those of a centered Gaussian process with independent
components. In particular A1 holds in that limit.
\end{theorem}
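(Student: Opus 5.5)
We argue by induction on the layer index $\lay$, with the width of layers $1,\dots,\lay-1$ sent to infinity one after another; the simultaneous limit is handled at the end. Fix finitely many inputs $x_1,\dots,x_m$. The object at layer $\lay$ is the finite vector of jets
\begin{equation*}
J^\lay := \left(\jet{j}h^\lay_i(x_a)\right)_{0\le j\le K,\ 1\le a\le m,\ i\le p},
\end{equation*}
where $p$ is an arbitrary fixed number of coordinates.

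\emph{Base case.} At $\lay=1$ we have $h^1_i(x)=W^1_i x+b^1_i$. So $h^1_{(1),i}=W^1_i$ and $h^1_{(j),i}=0$ for $j\ge2$. The vector $J^1$ is therefore a linear image of the independent Gaussians $(W^1_i,b^1_i)$, which makes it exactly centered Gaussian, with independent coordinates in $i$.

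\emph{Inductive step, conditional Gaussianity.} The key observation is that for $j\ge1$ we have $h^\lay_{(j)}=W^\lay g^{\lay-1}_{(j)}$, and $h^\lay_{(0)}=W^\lay g^{\lay-1}_{(0)}+b^\lay$. Every entry of $J^\lay$ is therefore linear in $(W^\lay,b^\lay)$, and the coefficients are the vectors $g^{\lay-1}_{(j)}(x_a)$, which are independent of $(W^\lay,b^\lay)$. Conditional on layer $\lay-1$, the vector $J^\lay$ is then exactly centered Gaussian. Its coordinates $i$ are independent and identically distributed, and each has covariance
\begin{equation*}
\widehat{\Sig}^{\lay}_{(j,a),(k,b)}=\frac{\sigma_w^2}{\NN}\sum_{r=1}^{\NN} g^{\lay-1}_{(j),r}(x_a)\,g^{\lay-1}_{(k),r}(x_b)+\sigma_b^2\,\mathbf 1_{j=k=0}.
\end{equation*}
Here $g$ is given by Fa\`a di Bruno \eqref{eq:faa} and its higher-order analogues.

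\emph{Concentration of the empirical covariance.} It then suffices to show that $\widehat\Sig^\lay$ converges in probability to a deterministic matrix $\Sig^\lay$. Bounded convergence of the conditional characteristic function, $\exp(-\tfrac12 t^\top\widehat\Sig^\lay t)$, then gives convergence of $J^\lay$ to a centered Gaussian with covariance $\Sig^\lay$, independent across $i$. By the induction hypothesis, the coordinates $r$ of the jet at layer $\lay-1$ are asymptotically i.i.d.\ draws from the limiting Gaussian. Each summand is a polynomial in the jet coordinates times products of $\phi^{(m)}(h_{(0)})$ with $m\le K$. Since $\phi^{(K)}$ is polynomially bounded, so are all lower derivatives, and so each summand is dominated by a polynomial in the jet. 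The task is therefore a law of large numbers for the empirical average of a polynomially bounded function over exchangeable, asymptotically independent coordinates. Two ingredients are needed:
\begin{enumerate}
\item Convergence of the mean, which follows from convergence in distribution together with uniform integrability. Uniform integrability comes from moment bounds on the jet that are uniform in width.
\item Vanishing variance. The off-diagonal covariance between coordinates $r\ne s$ tends to zero because pairs of coordinates are asymptotically independent, which is again part of the induction hypothesis applied with $p=2$.
\end{enumerate}
We expect this step to be the main obstacle. Both ingredients rest on uniform-in-width moment bounds for all jet entries at layer $\lay-1$. We will prove these bounds by a separate induction: conditionally Gaussian variables have all moments controlled by moments of $\widehat\Sig$, and $\widehat\Sig$ is an average of polynomially bounded quantities whose moments are bounded by those of the previous layer.

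\emph{Extension to the simultaneous limit.} To pass from sequential to simultaneous width limits, we use the fact that the moment bounds and the conditional-Gaussian argument never require one width to be much larger than the others. Alternatively, we can cite the standard tensor-program or Hanin-style results \citep{hanin2022hierarchies} for the joint limit of a network and its input derivatives, specialized to the jet. Finally, A1 is the case $m=1$ of this statement, namely joint Gaussianity of the jet at a single input.
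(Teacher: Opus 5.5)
Your proposal is correct in outline and takes a genuinely different route from the paper. The paper's proof is a one-line citation of Theorem~2.2 of Hanin~\citep{hanin2022hierarchies} with $n_0=1$, $r=K$, which is the reference you keep in reserve. You instead reprove the result by the standard layer induction. Conditional on layer $\lay-1$, every jet entry at layer $\lay$ is linear in $(W^\lay,b^\lay)$, so it is exactly Gaussian with covariance $\widehat\Sig^\lay$; this is the same matrix $S$ the paper's exact sampler in Section~\ref{sec:impl-sampler} draws from. The limit then reduces to a law of large numbers for $\widehat\Sig^\lay$ over exchangeable neurons, backed by width-uniform moment bounds.

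The citation is short and inherits Hanin's hypotheses, the joint width limit and general weight laws, but it hides the mechanism. Your argument is self-contained and shows why $h_{(2)}$ and $h_{(3)}$ are Gaussian even though Fa\`a di Bruno builds them from products: the nonlinearity enters only through the coefficients $g_{(j)}$, which the law of large numbers makes deterministic. It also produces the limiting form of \eqref{eq:sigmap} as a by-product.

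Two points need tightening.

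First, the sequential framing and the separate extension step are unnecessary. Write $F_r := F(J^{\lay-1}_r)$. By exchangeability,
\[
\mathrm{Var}\bigl(\tfrac1n\sum_r F_r\bigr)=\tfrac1n\mathrm{Var}\,F_1+\bigl(1-\tfrac1n\bigr)\Cov(F_1,F_2).
\]
The first term needs only the width of layer $\lay-1$ to grow, and the second depends only on the earlier widths. So your induction already runs in the joint limit.

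Second, ``convergence in distribution plus uniform integrability'' yields $\Ex[F(J)]\to\Ex[F(Z)]$ only when the discontinuity set of $F$ is null for the limit law. The stated hypothesis (merely $K$ times differentiable) allows $\phi^{(K)}$ to be discontinuous on a set of positive measure, as in Volterra-type examples. You can fix this in either of two ways.
\begin{enumerate}
\item Assume $\phi\in C^K$, which covers $\tanh$.
\item Use the conditional Gaussianity you already have. For $\lay-1\ge2$, set $G(S):=\Ex_{\mathcal N(0,S)}[F]$; then $\Ex[F_r\mid\text{layer }\lay-2]=G(\widehat\Sig^{\lay-1})$ and $\Cov(F_1,F_2)=\mathrm{Var}\,G(\widehat\Sig^{\lay-1})$. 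The map $G$ is continuous at $\Sig^{\lay-1}$ when the $h_{(0)}$ block is nonsingular. The reason is that the discontinuities enter only through $h_{(0)}$, and conditioning on $h_{(0)}$ leaves a polynomial whose coefficients depend continuously on $S$, integrated against a nondegenerate Gaussian density. For a single input, which is all A1 needs, that block's variance is at least $\sigma_b^2$. For $\lay-1=1$ the summands are exactly i.i.d.\ and no continuity is needed.
\end{enumerate}
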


\begin{proof}
This is Theorem 2.2 of Hanin~\citep{hanin2022hierarchies} specialised to $n_0=1$ and
$r=K$. The hypothesis is satisfied by $\tanh$, whose derivatives of every order are
bounded. Related Gaussian-process limits for fixed collections of network quantities are
obtained by other routes in \citep{yang2019tensor}.
\end{proof}

Two qualifications should be read with Theorem~\ref{prop:gaussianjet}. The limit is
taken at fixed depth as the width grows, so it licenses A1 layer by layer and does not by
itself provide a bound uniform in $\lay$; the recursions below are applied at each depth in
that limit, which is the same regime in which the mean-field variance map itself is derived.
And the convergence is in finite-dimensional distributions, which is what the expectations
in \eqref{eq:sigmap} require and no more. Section~\ref{sec:numerics-assumptions} reports the
measured skewness and excess kurtosis of the jet at the widths actually used, which is the
finite-width question that a limit theorem does not answer.

The derivation does not assume that activation factors are independent of the jet factors
they multiply. Under A1, the required expectations are evaluated by Gaussian integration
by parts and the fixed-point identities in Proposition~\ref{prop:identities}.

That independence is a separate statement. It is named here because it is the natural first
thing to try, because Section~\ref{sec:method-parity} turns on precisely where it fails, and
because Section~\ref{sec:numerics-assumptions} reports where it holds.

\begin{description}
  \item[(A2)] For an activation factor $A$, meaning a product of derivatives of $\phi$
    evaluated at $h^\lay_{(0)}$, and a product $J$ of jet components,
    $\Ex[A\,J] = \Ex[A]\,\Ex[J]$.
\end{description}

A2 is not an assumption of this paper's results. No statement in Section~\ref{sec:theory}
uses it, and it is listed only so that the two sections which examine it have something to
refer to.

\section{Theory}
\label{sec:theory}

Every expectation appearing in \eqref{eq:sigmap} has the form
$\Ex\!\left[g(h_{(0)})\,h_{(i_1)}\cdots h_{(i_r)}\right]$, a function of the preactivation
multiplied by a product of jet components. Under A1 these are computable in closed form.
Two facts do the work: Gaussian integration by parts, and a set of identities that pin the
couplings $\sigma_{0k}$ at the variance fixed point.

\subsection{Two tools}
\label{sec:theory-tools}

\begin{lemma}[Gaussian integration by parts]
\label{lem:stein}
Let $X\sim\mathcal N(0,q)$ and let $g$ be differentiable with $\Ex|g'(X)|<\infty$. Then
$\Ex[Xg(X)] = q\,\Ex[g'(X)]$ and, applying this twice,
\begin{equation}
\label{eq:stein2}
\Ex\!\left[X^2g(X)\right] = q\,\Ex[g(X)] + q^2\,\Ex[g''(X)] .
\end{equation}
\end{lemma}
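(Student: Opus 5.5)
The plan is to prove the first identity $\Ex[Xg(X)] = q\,\Ex[g'(X)]$ by a single integration by parts against the Gaussian density, and then to obtain \eqref{eq:stein2} by applying that identity twice.

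Write $p(x) = (2\pi q)^{-1/2}e^{-x^2/(2q)}$. The key observation is $x\,p(x) = -q\,p'(x)$, so that
\begin{equation*}
\Ex[Xg(X)] = \int_{\mathbb{R}} g(x)\,x\,p(x)\,dx = -q\int_{\mathbb{R}} g(x)\,p'(x)\,dx ,
\end{equation*}
and integrating by parts over $[-R,R]$ gives $q\int_{-R}^{R} g'p\,dx - q\,[g\,p]_{-R}^{R}$. Letting $R\to\infty$ yields the claim, provided the boundary term vanishes along a sequence $R\to\infty$ and $xg(x)p(x)$ is integrable. This justification is the only real obstacle, since the hypothesis controls only $\Ex|g'(X)|$. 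For $x>0$ I would write $g(x) = g(0)+\int_0^x g'(t)\,dt$ and swap the order of integration (Fubini), using $\int_t^\infty x\,p(x)\,dx = q\,p(t)$. This gives
\begin{equation*}
\int_0^\infty x\,g(x)p(x)\,dx = q\,g(0)\,p(0) + q\int_0^\infty g'(t)\,p(t)\,dt ,
\end{equation*}
and the mirror computation on $(-\infty,0)$ gives the same expression with the opposite sign on the $g(0)$ term. Summing, the $g(0)$ terms cancel and we obtain exactly $q\,\Ex[g'(X)]$. This Fubini route avoids boundary terms altogether: absolute integrability of the double integral follows from $\Ex|g'(X)|<\infty$. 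It also shows that $Xg(X)$ is integrable, because the same swap applied to $|g'|$ bounds $\int x\,|g(x)-g(0)|\,p\,dx$.

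For \eqref{eq:stein2}, apply the first identity to $f(x) = x\,g(x)$. Since $f' = g + xg'$, we get $\Ex[X^2g(X)] = q\,\Ex[g(X)] + q\,\Ex[Xg'(X)]$. Applying the identity once more, now to $g'$, gives $\Ex[Xg'(X)] = q\,\Ex[g''(X)]$, and combining the two lines yields \eqref{eq:stein2}. This step implicitly requires $g$ to be twice differentiable with $\Ex|g(X)|$, $\Ex|Xg'(X)|$ and $\Ex|g''(X)|$ finite. I would state these as the standing conditions for the second-order formula. They hold automatically in every application in the paper, where $g$ is a product of derivatives of $\tanh$ times polynomials, so all derivatives are polynomially bounded and Gaussian moments of every order exist.
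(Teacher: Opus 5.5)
Your proposal is correct and follows the route the paper indicates. The paper gives no proof beyond ``applying this twice'': your Fubini argument is the standard rigorous justification of $\Ex[Xg(X)]=q\,\Ex[g'(X)]$, and applying it to $xg(x)$ and then to $g'$ is exactly that double application. You are also right that the second formula needs $g$ twice differentiable with $\Ex|g''(X)|<\infty$, which the lemma's stated hypotheses omit; these conditions hold for every $\tanh$-derived $g$ the paper uses.
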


The second tool is a set of identities relating the couplings $\sigma_{0k}$ to the other
entries of the jet covariance. They are exact at every depth, and they are stated that way
here rather than at the fixed point, because the variance map approaches its fixed point
without reaching it at any finite depth.

\begin{proposition}[Variance identities, exact at every depth]
\label{prop:identities}
Write $q_\lay(x) := \Ex[(h^\lay_i(x))^2]$ for the preactivation variance at layer $\lay$,
and $q_\lay',q_\lay'',q_\lay'''$ for its derivatives in $x$. Then for every $\lay$
\begin{equation}
\label{eq:identities-exact}
\sigma_{01}^{\lay} = \tfrac12q_\lay',
\qquad
\sigma_{02}^{\lay} = -v_1^{\lay} + \tfrac12q_\lay'',
\qquad
\sigma_{03}^{\lay} = -3c_{12}^{\lay} + \tfrac12q_\lay'''.
\end{equation}
In particular, wherever $q_\lay$ is constant in $x$,
\begin{equation}
\label{eq:identities}
\sigma_{01} = 0,
\qquad
\sigma_{02} = -v_1,
\qquad
\sigma_{03} = -3c_{12}.
\end{equation}
\end{proposition}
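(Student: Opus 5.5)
The plan is to differentiate the definition $q_\lay(x)=\Ex[(h^\lay_i(x))^2]$ repeatedly in $x$ and read the identities off the resulting expressions. No recursion, no Gaussianity and no fixed-point property are needed: the identities are consequences of the Leibniz rule alone. The one analytic input is that differentiation may be passed under the expectation. I would justify this first. For $\phi=\tanh$, or any $\phi$ with bounded derivatives through order three, each $h^\lay_{(j),i}$ is a polynomial in the input, the weights and the biases, multiplied by bounded activation factors. It is therefore dominated, uniformly for $x$ in compact sets, by a polynomial in Gaussian variables, and every moment of such a polynomial is finite. Dominated convergence then lets $\partial_x$ commute with $\Ex$ up to third order. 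I expect this to be the only step that needs any care, and it is routine.

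With that in hand, write $h=h^\lay_i$ and $h_{(j)}=\jet{j}h$, and differentiate $h^2$:
\begin{align*}
\partial_x (h^2) &= 2h h_{(1)},\\
\partial_x^2 (h^2) &= 2h_{(1)}^2 + 2h h_{(2)},\\
\partial_x^3 (h^2) &= 6h_{(1)}h_{(2)} + 2h h_{(3)}.
\end{align*}
Taking expectations and using the definitions $\sigma_{0k}=\Ex[h h_{(k)}]$, $v_1=\Ex[h_{(1)}^2]$ and $c_{12}=\Ex[h_{(1)}h_{(2)}]$ from \eqref{eq:sigma} gives three relations. The first is $q_\lay'=2\sigma_{01}$. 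The second is $q_\lay''=2v_1+2\sigma_{02}$. The third is $q_\lay'''=6c_{12}+2\sigma_{03}$. Solving each for $\sigma_{0k}$ yields \eqref{eq:identities-exact} exactly.

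The identities hold at every depth and every width, since no distributional assumption was used. The index $i$ is immaterial because the components are exchangeable. For the specialization \eqref{eq:identities}, I would note that if $q_\lay$ is constant in $x$, then $q_\lay'=q_\lay''=q_\lay'''=0$, and the formulas reduce to $\sigma_{01}=0$, $\sigma_{02}=-v_1$ and $\sigma_{03}=-3c_{12}$.

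Finally, I would remark on when the hypothesis ``$q_\lay$ constant in $x$'' holds. In the mean-field limit $q_\lay(x)$ depends on $x$ only through $q_1(x)=\sigma_w^2x^2+\sigma_b^2$ and its iterates under \eqref{eq:qmap}. It is therefore constant in $x$ only asymptotically, as $q_\lay\to q^\ast$, or exactly if the iteration is started at the fixed point. This is why the proposition states the exact form rather than only the specialized one.
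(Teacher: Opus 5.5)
Your proposal is correct and is essentially the paper's own proof: differentiate $q_\lay=\Ex[h_{(0)}^2]$ three times in $x$ under the expectation, obtain $q_\lay'=2\sigma_{01}$, $q_\lay''=2v_1+2\sigma_{02}$ and $q_\lay'''=6c_{12}+2\sigma_{03}$, and rearrange. Your dominated-convergence justification for exchanging $\partial_x$ and $\Ex$ (domination by polynomials in Gaussian variables, uniformly for $x$ in compact sets) is more explicit than the paper's one-line appeal to finite second moments and bounded activation derivatives.
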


\begin{proof}
Differentiate $q_\lay = \Ex[h_{(0)}^2]$ in $x$, exchanging differentiation and expectation,
which is permitted because the jet has finite second moments and $\phi$ has bounded
derivatives through the order used. The first derivative gives
$q_\lay' = 2\Ex[h_{(0)}h_{(1)}]$. The second gives
$q_\lay'' = 2\Ex[h_{(1)}^2] + 2\Ex[h_{(0)}h_{(2)}]$. The third gives
$q_\lay''' = 6\Ex[h_{(1)}h_{(2)}] + 2\Ex[h_{(0)}h_{(3)}]$. Rearranging each gives
\eqref{eq:identities-exact}, and setting the derivatives of $q_\lay$ to zero gives
\eqref{eq:identities}.
\end{proof}

The identities \eqref{eq:identities} are therefore not available at finite depth as stated.
What makes them usable is that the discrepancy decays geometrically, at a rate fixed by the
activation and the variance map alone.

\begin{lemma}[The transient is geometric]
\label{lem:transient}
Let $V(q) := \sigma_w^2\,\Ex_{z\sim\mathcal N(0,q)}[\phi(z)^2] + \sigma_b^2$ be the variance
map, so that $q_\lay = V(q_{\lay-1})$, and let $q^\ast$ be an attracting fixed point of $V$
with $\lambda := |V'(q^\ast)| < 1$. Fix $\lambda<\bar\lambda<1$. Then for every
$x$ in a neighbourhood of which $q_1$ is smooth, there are constants $C_m$ with
\begin{equation}
\label{eq:transient}
\left|q_\lay^{(m)}(x)\right| \le C_m\,\bar\lambda^{\,\lay},
\qquad m = 1,2,3,
\qquad
\left|q_\lay - q^\ast\right| \le C_0\,\bar\lambda^{\,\lay}.
\end{equation}
\end{lemma}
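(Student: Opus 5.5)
The plan is to view $q_\lay(x)$ as the orbit of the one-dimensional map $V$ started from $q_1(x)$, which depends smoothly on $x$. Write $q_\lay = V^{\circ(\lay-1)}(q_1(x))$ and differentiate in $x$ with the chain rule. First we establish the zeroth-order bound, then carry an induction on the derivative order $m=1,2,3$. At each order the recursion is linear in the top derivative, with multiplier $V'(q_{\lay-1})$, plus a forcing built from lower derivatives that are already known to decay.

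\emph{Step 0 (contraction).} We need $V$ to be $C^4$ near $q^\ast$. For $\tanh$ this follows by writing $\Ex_{z\sim\mathcal N(0,q)}[\phi(z)^2]=\Ex[\phi(\sqrt q\,\xi)^2]$ and differentiating under the integral for $q>0$. Since $\lambda<\bar\lambda$, there is a neighbourhood $U$ of $q^\ast$ on which $|V'|\le\bar\lambda$ and $V(U)\subset U$. Because $q^\ast$ is attracting, the orbit from $q_1(x)$ enters $U$ after some finite $\lay_0$. By continuity $\lay_0$ can be chosen uniformly for $x$ in a small compact neighbourhood. Here I read "attracting" as meaning that $q_1(x)$ lies in the basin of attraction, which holds for $\tanh$, where the fixed point is globally attracting on $(0,\infty)$. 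Once the orbit is in $U$, the mean value theorem gives $|q_\lay-q^\ast|\le\bar\lambda^{\lay-\lay_0}|q_{\lay_0}-q^\ast|$. The finitely many earlier layers are absorbed into $C_0$.

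\emph{Step 1 (derivatives).} Differentiating $q_\lay=V(q_{\lay-1})$ gives three recursions:
\begin{align*}
q_\lay' &= V'(q_{\lay-1})\,q_{\lay-1}',\\
q_\lay'' &= V'(q_{\lay-1})\,q_{\lay-1}'' + V''(q_{\lay-1})\,(q_{\lay-1}')^2,\\
q_\lay''' &= V'(q_{\lay-1})\,q_{\lay-1}''' + 3V''(q_{\lay-1})\,q_{\lay-1}'q_{\lay-1}'' + V'''(q_{\lay-1})\,(q_{\lay-1}')^3.
\end{align*}
For $m=1$, the product of the multipliers after $\lay_0$ is at most $\bar\lambda^{\lay-\lay_0}$, so $|q_\lay'|\le C_1\bar\lambda^\lay$. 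For $m=2$ we need a discrete Duhamel sum. The forcing term is bounded by $\sup_U|V''|\,C_1^2\bar\lambda^{2\lay}$, so
\[
|q_\lay''|\le \bar\lambda^{\lay-\lay_0}|q''_{\lay_0}| + \sum_{k}\bar\lambda^{\lay-k}\,C\,\bar\lambda^{2k}.
\]
The sum is $\bar\lambda^\lay\sum_k\bar\lambda^{k}$, and this geometric series converges, giving $O(\bar\lambda^\lay)$. The case $m=3$ is identical: the forcing is $O(\bar\lambda^{3\lay}+\bar\lambda^{2\lay})$, which again sums to $O(\bar\lambda^\lay)$. The constants $C_m$ depend on $x$ only through $q_1$ and its derivatives on the chosen neighbourhood.

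\emph{Main obstacle.} The Duhamel step is routine. The point needing care is why the statement uses $\bar\lambda$ rather than $\lambda$. With the sharp rate, the product $\prod V'(q_k)$ differs from $\lambda^{\lay}$ by a bounded factor, since $\sum_k|V'(q_k)-V'(q^\ast)|<\infty$ by Step 0. The forcing convolution, however, can produce a factor of $\lay\lambda^\lay$ in degenerate cases. Passing to any $\bar\lambda>\lambda$ absorbs both effects cleanly.

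The other delicate point is justifying smoothness of $q_1$ in $x$ and uniformity of $\lay_0$. For the first layer, $h^1=W^1x+b^1$, so $q_1(x)=\sigma_w^2x^2+\sigma_b^2$, which is polynomial. This makes the smoothness hypothesis automatic, and the uniformity then comes from continuity of finitely many iterates of $V$.
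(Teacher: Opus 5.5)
Your proposal is correct and follows the same route as the paper: the linearization estimate on a neighbourhood where $|V'|\le\bar\lambda$ for the zeroth-order bound, the product of multipliers $V'(q_k)$ for $m=1$, and an affine recursion whose $O(\bar\lambda^{2\ell})$ forcing is summed as a geometric series for $m=2,3$, with your version also writing out the third-order chain rule and the Duhamel sum that the paper leaves implicit. One aside is inaccurate, though the lemma as stated is unaffected: the forcing at orders two and three decays like $\lambda^{2\ell}$, not $\lambda^{\ell}$, so no resonant $\ell\lambda^{\ell}$ term arises, and for $\lambda>0$ your own bounded-product observation already gives the sharp rate $C\lambda^{\ell}$; passing to $\bar\lambda$ is therefore a convenience that also covers $\lambda=0$, not a necessity.
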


\begin{proof}
Since $q^\ast$ is attracting, $q_\lay\to q^\ast$ and $|q_\lay - q^\ast|\le C_0\bar\lambda^\lay$
by the standard linearization estimate for a smooth map at an attracting fixed point. The
chain rule gives $q_\lay' = V'(q_{\lay-1})\,q_{\lay-1}'$, so
$q_\lay' = q_1'\prod_{m=1}^{\lay-1}V'(q_m)$; each factor is at most $\bar\lambda$ in modulus
for $\lay$ beyond a finite index, which gives the case $m=1$. Differentiating again,
$q_\lay'' = V''(q_{\lay-1})(q_{\lay-1}')^2 + V'(q_{\lay-1})q_{\lay-1}''$, an affine
recursion in $q_\lay''$ with multiplier bounded by $\bar\lambda$ and forcing
$O(\bar\lambda^{2\lay})$; summing the resulting geometric series gives the case $m=2$. The
case $m=3$ follows in the same way, its forcing being $O(\bar\lambda^{2\lay})$ again.
\end{proof}

The correlation-map slope $\chio$ and the variance-map slope $\lambda$ are distinct:
$\chio$ governs first-order signal propagation, whereas $\lambda$ governs the transient in
the variance map.

\begin{corollary}
\label{cor:indep}
Under A1, write $h_{(k)} = \beta_k^{\lay}h_{(0)} + r_k$ with
$\beta_k^{\lay} := \sigma_{0k}^{\lay}/q_\lay$. The residual $r_k$ is uncorrelated with
$h_{(0)}$ by construction, so under A1 it is independent of it, and $h_{(0)}$ is
independent of the whole collection $\left(r_1,r_2,r_3,\dots\right)$. This decomposition is
exact at every depth. By \eqref{eq:identities-exact},
$\beta_1^{\lay} = q_\lay'/(2q_\lay)$, which vanishes wherever $q_\lay$ is constant in $x$
and is $O(\bar\lambda^{\lay})$ in general by Lemma~\ref{lem:transient}; in that case
$h_{(1)} = r_1$ is independent of $h_{(0)}$.
\end{corollary}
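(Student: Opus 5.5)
The plan is to treat the corollary as Gaussian linear regression on $h_{(0)}$, followed by a substitution of the identities already proved. Fix a depth $\lay$ and an input $x$, and work in the infinite-width limit, where A1 holds by Theorem~\ref{prop:gaussianjet}. Since $q_\lay>0$ (the bias contributes $\sigma_b^2>0$), set $\beta_k^{\lay}=\sigma_{0k}^{\lay}/q_\lay$ and define $r_k:=h_{(k)}-\beta_k^{\lay}h_{(0)}$. A one-line computation gives
\begin{equation*}
\Ex[r_k h_{(0)}]=\sigma_{0k}^{\lay}-\beta_k^{\lay}q_\lay=0 .
\end{equation*}
So each $r_k$ is uncorrelated with $h_{(0)}$, and this holds at every depth, because it uses only the definition of $\beta_k^{\lay}$ and not any fixed-point identity.

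The key step is to upgrade ``uncorrelated'' to ``independent'', and to do so for the whole collection at once. The vector $(h_{(0)},r_1,\dots,r_K)$ is an invertible linear image of the jet $(h_{(0)},\dots,h_{(K)})$, with a unit lower-triangular transformation. Under A1 it is therefore jointly Gaussian and centered. Its covariance matrix is block diagonal, with block $q_\lay$ for $h_{(0)}$ and the block of the $r_k$, because every cross entry $\Ex[h_{(0)}r_k]$ vanishes by the computation above. For jointly Gaussian vectors, a block-diagonal covariance is equivalent to independence of the blocks. I would verify this by factorizing the characteristic function. It follows that $h_{(0)}$ is independent of the entire vector $(r_1,\dots,r_K)$, and in particular of each $r_k$ separately.

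This joint statement is the step I expect to matter most later, since it is the one that requires care. Pairwise independence of $h_{(0)}$ and each $r_k$ would not suffice: the theory needs to factor expectations such as $\Ex[g(h_{(0)})\,r_ir_jr_k]$. For this reason the argument must rest on the block-diagonal structure of the joint Gaussian vector rather than on separate pairwise checks. The remaining, minor caveat is that A1 is a limit statement, so the independence holds in the limit law that Theorem~\ref{prop:gaussianjet} provides. This is the sense in which every expectation in \eqref{eq:sigmap} is evaluated.

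Finally, substitute the first identity of \eqref{eq:identities-exact}, $\sigma_{01}^{\lay}=\tfrac12 q_\lay'$, to obtain $\beta_1^{\lay}=q_\lay'/(2q_\lay)$. Where $q_\lay$ is constant in $x$, this vanishes, so $h_{(1)}=r_1$ is independent of $h_{(0)}$. In general, Lemma~\ref{lem:transient} gives $|q_\lay'|\le C_1\bar\lambda^{\lay}$ and $|q_\lay-q^\ast|\le C_0\bar\lambda^\lay$. Hence $q_\lay\ge q^\ast/2$ for all $\lay$ beyond a finite index. The finitely many earlier layers have $q_\lay\ge\sigma_b^2>0$ and are absorbed into the constant. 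Together these give $\beta_1^{\lay}=O(\bar\lambda^{\lay})$.
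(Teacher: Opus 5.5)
Your proposal is correct and follows the paper's own argument, which is given inline in the corollary: regress on $h_{(0)}$ to make each $r_k$ uncorrelated with it, use joint Gaussianity under A1 to turn zero cross-covariance into independence of $h_{(0)}$ from the whole collection of residuals, then substitute $\sigma_{01}^{\lay}=\tfrac12 q_\lay'$ and apply Lemma~\ref{lem:transient}. What you add are details the paper leaves implicit: the block-diagonal covariance of the full vector $(h_{(0)},r_1,\dots,r_K)$, and a uniform positive lower bound on $q_\lay$ so that $\beta_1^{\lay}=O(\bar\lambda^{\lay})$ follows from the bound on $q_\lay'$.
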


Corollary~\ref{cor:indep} is what makes the expectations tractable. Any
$\Ex[g(h_{(0)})\,h_{(i_1)}\cdots h_{(i_r)}]$ is expanded by substituting
$h_{(k)}=\beta_kh_{(0)}+r_k$, after which every term factorizes into a moment of
$g$ against a power of $h_{(0)}$, evaluated by Lemma~\ref{lem:stein}, times a joint moment
of $\left(h_{(1)},r_2,r_3\right)$, evaluated by Isserlis. No independence assumption beyond
A1 is used.

\subsection{Orders one and two}
\label{sec:theory-order12}

The order-1 recursion is classical: $g_{(1)}=\phi'h_{(1)}$ with $h_{(1)}$ independent of
$h_{(0)}$ by Corollary~\ref{cor:indep}, so \eqref{eq:sigmap} gives
\begin{equation}
\label{eq:v1}
v_1^{\lay} = \chio\,v_1^{\lay-1}.
\end{equation}
Condition \eqref{eq:eoc} makes $v_1$ independent of depth, which is the entire content of
the edge-of-chaos criterion.

\begin{proposition}[Order two]
\label{prop:order2}
Under A1, wherever the preactivation variance is constant in $x$,
\begin{equation}
\label{eq:v2}
v_2^{\lay} = \chio\,v_2^{\lay-1} + 3\chi_2\left(v_1^{\lay-1}\right)^2
\end{equation}
holds in the mean-field limit with no correction terms: the two non-vanishing corrections
cancel identically rather than approximately. At finite depth the same evaluation gives
\begin{equation}
\label{eq:v2-finite}
v_2^{\lay} = \chio^{(\lay)}\,v_2^{\lay-1} + 3\chi_2^{(\lay)}\left(v_1^{\lay-1}\right)^2
+ E_2^{\lay},
\qquad
\left|E_2^{\lay}\right| \le C\bar\lambda^{\,\lay}\left(1 + v_2^{\lay-1}\right),
\end{equation}
where $\chio^{(\lay)},\chi_2^{(\lay)}$ are the susceptibilities evaluated at $q_{\lay-1}$
rather than $q^\ast$ and satisfy $\chio^{(\lay)} = \chio + O(\bar\lambda^{\lay})$,
$\chi_2^{(\lay)} = \chi_2 + O(\bar\lambda^{\lay})$.
\end{proposition}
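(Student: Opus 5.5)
The plan is to evaluate \eqref{eq:sigmap} with $j=k=2$ directly, using the decomposition of Corollary~\ref{cor:indep} and Lemma~\ref{lem:stein}. Write $g_{(2)}=\phi''h_{(1)}^2+\phi'h_{(2)}$ at layer $\lay-1$. Then
\[
\Ex[g_{(2)}^2]=\Ex[\phi''^2h_{(1)}^4]+2\Ex[\phi'\phi''h_{(1)}^2h_{(2)}]+\Ex[\phi'^2h_{(2)}^2].
\]
First I treat the case where $q$ is constant in $x$. Then $\beta_1=0$, so $h_{(1)}=r_1$ is independent of $h_{(0)}$. Also $h_{(2)}=\beta_2h_{(0)}+r_2$ with $\beta_2=-v_1/q$ by \eqref{eq:identities}, and $(r_1,r_2)$ is jointly Gaussian, centered, and independent of $h_{(0)}$.

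I evaluate the three terms in turn.
\begin{itemize}
\item The first term factorizes. By Isserlis, $\Ex[h_{(1)}^4]=3v_1^2$, so it equals $3v_1^2\,\Ex[\phi''^2]$. This is $3\chi_2v_1^2/\sigma_w^2$ at $q^\ast$.
\item In the cross term, the $r_2$ piece gives $\Ex[\phi'\phi'']\,\Ex[h_{(1)}^2r_2]$, which is zero as an odd centered Gaussian moment. The $\beta_2h_{(0)}$ piece gives $\beta_2v_1\,\Ex[h_{(0)}\phi'\phi'']$. By Lemma~\ref{lem:stein} this equals $\beta_2v_1q\,\Ex[\phi''^2+\phi'\phi''']$. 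The cross term is therefore $-2v_1^2\,\Ex[\phi''^2+\phi'\phi''']$.
\item For the last term, expand $(\beta_2h_{(0)}+r_2)^2$. The mixed piece vanishes by independence. Use $\Ex[r_2^2]=v_2-\beta_2^2q$, and apply \eqref{eq:stein2} to $g=\phi'^2$, whose second derivative is $2\phi''^2+2\phi'\phi'''$. This gives
\[
\beta_2^2q\,\Ex[\phi'^2]+2\beta_2^2q^2\,\Ex[\phi''^2+\phi'\phi''']+(v_2-\beta_2^2q)\,\Ex[\phi'^2]
= v_2\,\Ex[\phi'^2]+2v_1^2\,\Ex[\phi''^2+\phi'\phi'''].
\]
\end{itemize}
The $+2v_1^2\,\Ex[\phi''^2+\phi'\phi''']$ in the last term cancels the cross term exactly. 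This is the identical cancellation claimed in the statement, and it happens precisely because $\sigma_{02}=-v_1$. Multiplying by $\sigma_w^2$ gives \eqref{eq:v2}.

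For \eqref{eq:v2-finite} I redo the same computation at finite depth, with exact coefficients $\beta_1=q'/(2q)$ and $\beta_2=(-v_1+\tfrac12q'')/q$, with $q=q_{\lay-1}$. This follows from \eqref{eq:identities-exact}. Now $h_{(1)}=\beta_1h_{(0)}+r_1$, so every expectation expands into the terms computed above plus terms carrying at least one factor of $\beta_1$ or of $q''$. Each such factor is $O(\bar\lambda^{\lay})$ by Lemma~\ref{lem:transient}. The leading terms, evaluated at $q_{\lay-1}$, give exactly $\chio^{(\lay)}v_2+3\chi_2^{(\lay)}v_1^2$ by the computation above. The remaining terms, after applying Lemma~\ref{lem:stein}, are polynomial in $v_1$, $\Ex[r_1r_2]$ and $\Ex[r_2^2]$, multiplied by Gaussian moments of derivatives of $\phi$ up to fourth order, which are bounded for $\tanh$. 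The statements $\chi_m^{(\lay)}=\chi_m+O(\bar\lambda^\lay)$ follow from smoothness of $q\mapsto\Ex_{\mathcal N(0,q)}[\phi^{(m)}(z)^2]$ together with $|q_\lay-q^\ast|\le C_0\bar\lambda^\lay$.

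The main obstacle is to put the error into the form $C\bar\lambda^{\lay}(1+v_2^{\lay-1})$. This needs $v_1^{\lay-1}$ to be bounded uniformly in $\lay$, which I would obtain from $v_1^\lay=\prod\chio^{(m)}v_1^1$ and the summability of $\chio^{(m)}-\chio=O(\bar\lambda^m)$ at $\chio=1$. It also needs the terms involving $\Ex[r_1r_2]$, which is at most $\sqrt{v_1v_2}$ by Cauchy--Schwarz, to be bounded by $1+v_2$ via $2\sqrt{v_2}\le1+v_2$. The constant $C$ must then absorb the finite number of early layers where $|V'(q_m)|>\bar\lambda$.
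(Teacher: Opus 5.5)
Your proposal is correct and follows the paper's proof essentially step for step. It uses the same split of $\Ex[g_{(2)}^2]$ into the quartic, cross and $\phi'^2h_{(2)}^2$ terms, the regression $h_{(2)}=\beta_2h_{(0)}+r_2$ with $\beta_2=-v_1/q$ from Proposition~\ref{prop:identities}, Lemma~\ref{lem:stein} applied to $\phi'\phi''=\tfrac12(\phi'^2)'$ and to $\phi'^2$, the same exact cancellation of $\pm2v_1^2\Ex[\phi''^2+\phi'\phi''']$, and at finite depth the same re-expansion with the exact $\beta_1,\beta_2$. Your small departures are refinements rather than gaps. You dispose of the $r_2$ piece by $\Ex[h_{(1)}^2r_2]=0$ (a centered Gaussian third moment) instead of by $\Ex[\phi'\phi'']=0$, which removes the paper's reliance on $\phi$ being odd. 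You also make explicit the uniform bound on $v_1$ and the Cauchy--Schwarz control of $c_{12}$ that the paper's sketch leaves implicit. One small correction: at finite depth the order-one recursion is $v_1^{\lay}=\chio^{(\lay)}v_1^{\lay-1}+\tfrac{\sigma_w^2}{4}(q_{\lay-1}')^2\Ex[(\phi'^2)'']$ rather than a pure product. The extra term is summable, so your boundedness argument at $\chio=1$ goes through unchanged.
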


\begin{proof}
From \eqref{eq:faa}, $g_{(2)} = \phi''h_{(1)}^2 + \phi'h_{(2)}$, so
$v_2^\lay = \sigma_w^2\left(T_A + T_B + T_C\right)$ with
$T_A = \Ex[\phi''^2h_{(1)}^4]$, $T_B = 2\Ex[\phi'\phi''h_{(1)}^2h_{(2)}]$ and
$T_C = \Ex[\phi'^2h_{(2)}^2]$.

For $T_A$, Corollary~\ref{cor:indep} gives $h_{(1)}\perp h_{(0)}$, so
$T_A = \Ex[\phi''^2]\,\Ex[h_{(1)}^4] = 3\Ex[\phi''^2]v_1^2$, contributing $3\chi_2v_1^2$.
This step is exact and does not use activation-jet factorization.

For $T_B$, substitute $h_{(2)}=\beta_2h_{(0)}+r_2$. The $r_2$ term carries
$\Ex[\phi'\phi'']$, which vanishes because $\phi'\phi''$ is odd for odd $\phi$, and the
$h_{(0)}$ term factorizes by Corollary~\ref{cor:indep}:
$T_B = 2\beta_2\,\Ex[\phi'\phi''h_{(0)}]\,v_1$. Since
$\phi'\phi''=\tfrac12\left(\phi'^2\right)'$, Lemma~\ref{lem:stein} gives
$\Ex[\phi'\phi''h_{(0)}] = \tfrac{q^\ast}{2}\Ex[(\phi'^2)'']$, and with
$\beta_2 = \sigma_{02}/q^\ast = -v_1/q^\ast$ from \eqref{eq:identities},
\begin{equation}
\label{eq:TB}
\sigma_w^2\,T_B = -\sigma_w^2\,v_1^2\,\Ex\!\left[(\phi'^2)''\right].
\end{equation}

For $T_C$, the same substitution and \eqref{eq:stein2} give
$T_C = \Ex[\phi'^2]v_2 + \beta_2^2\Cov\!\left(\phi'^2,h_{(0)}^2\right)$ with
$\Cov(\phi'^2,h_{(0)}^2) = q^{\ast2}\Ex[(\phi'^2)'']$, so
\begin{equation}
\label{eq:TC}
\sigma_w^2\,T_C = \chio\,v_2 + \sigma_w^2\,v_1^2\,\Ex\!\left[(\phi'^2)''\right].
\end{equation}

Equations \eqref{eq:TB} and \eqref{eq:TC} are equal and opposite, and \eqref{eq:v2}
follows.

For \eqref{eq:v2-finite}, repeat the evaluation with $q_{\lay-1}$ in place of $q^\ast$ and
$\beta_k^{\lay}$ in place of its fixed-point value. Three changes appear. The
susceptibilities are Gaussian expectations against $\mathcal N(0,q_{\lay-1})$, and
$q\mapsto\sigma_w^2\Ex_{\mathcal N(0,q)}[\phi^{(m)2}]$ is smooth, so they differ from
$\chi_m$ by $O(|q_{\lay-1}-q^\ast|) = O(\bar\lambda^{\lay})$. The term $T_A$ acquires a
contribution through $\beta_1^{\lay} = O(\bar\lambda^{\lay})$, since $h_{(1)}$ is no longer
exactly independent of $h_{(0)}$. And $\beta_2^{\lay}$ differs from $-v_1/q^\ast$ by
$q_{\lay}''/(2q_\lay) = O(\bar\lambda^{\lay})$, so the cancellation between \eqref{eq:TB}
and \eqref{eq:TC} is exact up to that order. Each contribution is bounded by a constant
times $\bar\lambda^{\lay}$, multiplied by the jet moments it accompanies, which gives the
stated bound.
\end{proof}

\subsection{Order three}
\label{sec:theory-order3}

\begin{proposition}[Order three]
\label{prop:order3}
Under A1,
\begin{equation}
\label{eq:v3}
v_3^{\lay} = \chio\,v_3^{\lay-1}
+ 15\chi_3v_1^3
+ 9\chi_2\!\left(v_1v_2 + 2c_{12}^2\right)
- 6\chi_2\,v_1c_{13},
\end{equation}
with all right-hand quantities at layer $\lay-1$.
\end{proposition}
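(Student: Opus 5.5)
The plan is to repeat the proof of Proposition~\ref{prop:order2} one order higher, in the same setting: the mean-field limit at the variance fixed point, where $q$ is constant in $x$ and \eqref{eq:identities} holds exactly. From \eqref{eq:faa}, $g_{(3)}=\phi'''h_{(1)}^3+3\phi''h_{(1)}h_{(2)}+\phi'h_{(3)}$, so by \eqref{eq:sigmap} $v_3^\lay=\sigma_w^2$ times the sum of six terms: three squares and three cross terms. These are
\begin{align*}
&S_1=\Ex[\phi'''^2h_{(1)}^6], \quad S_2=9\Ex[\phi''^2h_{(1)}^2h_{(2)}^2], \quad S_3=\Ex[\phi'^2h_{(3)}^2],\\
&X_{12}=6\Ex[\phi'''\phi''h_{(1)}^4h_{(2)}], \quad X_{13}=2\Ex[\phi'''\phi'h_{(1)}^3h_{(3)}], \quad X_{23}=6\Ex[\phi''\phi'h_{(1)}h_{(2)}h_{(3)}].
\end{align*}

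\textbf{Reduction to Gaussian moments.} In each term I will substitute $h_{(2)}=\beta_2h_{(0)}+r_2$ and $h_{(3)}=\beta_3h_{(0)}+r_3$, using Corollary~\ref{cor:indep}. By \eqref{eq:identities}, $\beta_1=0$, $\beta_2=-v_1/q^\ast$ and $\beta_3=-3c_{12}/q^\ast$. Since $h_{(1)}$ is independent of $h_{(0)}$, the residuals satisfy
\[
\Cov(h_{(1)},r_k)=c_{1k}, \qquad \Var(r_2)=v_2-v_1^2/q^\ast, \qquad \Cov(r_2,r_3)=c_{23}-3c_{12}v_1/q^\ast,
\]
and similarly for $\Var(r_3)$. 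Every term then factorizes into an activation moment $\Ex[g(h_{(0)})h_{(0)}^p]$ with $p\le2$ times an Isserlis moment of $(h_{(1)},r_2,r_3)$. For odd $\phi$, parity kills every product with an odd activation factor against an even power of $h_{(0)}$, and conversely. Thus $X_{12}$ keeps only its $\beta_2$ part, $X_{13}$ only its $r_3$ part, and $X_{23}$ only its $\beta_2h_{(0)}r_3$ and $\beta_3h_{(0)}r_2$ parts. The powers $h_{(0)}$ and $h_{(0)}^2$ are removed with Lemma~\ref{lem:stein}. For example, $\Ex[\phi''^2h_{(0)}^2]=q\Ex[\phi''^2]+q^2\Ex[(\phi''^2)'']$ and $\Ex[\phi'''\phi''h_{(0)}]=\tfrac q2\Ex[(\phi''^2)'']$.

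\textbf{Sorting into leading terms and remainders.} The leading pieces are immediate:
\begin{itemize}
\item $S_1$ gives $15\chi_3v_1^3$.
\item The $r_2^2$ part of $S_2$ gives $9\Ex[\phi''^2](v_1\Var r_2+2c_{12}^2)$, whose main part is $9\chi_2(v_1v_2+2c_{12}^2)$.
\item The $r_3^2$ part of $S_3$ gives $\chio v_3$ plus $\Var(r_3)-v_3$ corrections.
\item $X_{13}$ gives $6\Ex[\phi'''\phi']v_1c_{13}$.
\end{itemize}
For $X_{13}$ I will use $(\phi'^2)''=2\phi''^2+2\phi'\phi'''$ to write $\Ex[\phi'\phi''']=-\Ex[\phi''^2]+\tfrac12\Ex[(\phi'^2)'']$. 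This produces the claimed $-6\chi_2v_1c_{13}$ plus a remainder proportional to $\Ex[(\phi'^2)'']$.

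Everything left over, namely
\begin{itemize}
\item the $-v_1^2/q$ and $\beta^2$ corrections in $S_2$ and $S_3$,
\item the surviving parts of $X_{12}$ and $X_{23}$,
\item the $\Ex[(\phi'^2)'']$ remainder from $X_{13}$,
\end{itemize}
is to be written in the basis $\Ex[(\phi''^2)'']$ and $\Ex[(\phi'^2)'']$, with polynomial coefficients in $v_1,c_{12},c_{13},c_{23}$.

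\textbf{Main obstacle.} The key step is showing that these leftover coefficients vanish identically, as the two order-two corrections did in \eqref{eq:TB}--\eqref{eq:TC}.
\begin{itemize}
\item The $\Ex[(\phi''^2)'']v_1^3$ terms come from $S_2$ (through $\beta_2^2$) and from $X_{12}$. I expect them to cancel using only $\beta_2=-v_1/q^\ast$.
\item The $\Ex[(\phi'^2)'']$ terms, of types $v_1c_{13}$, $c_{12}^2$ and $c_{12}c_{23}$, come from $S_3$ (through $\beta_3^2$ and $\Cov(r_2,r_3)$-type pieces), $X_{23}$ and $X_{13}$. Their cancellation needs $\sigma_{03}=-3c_{12}$ together with $\sigma_{02}=-v_1$.
\end{itemize}

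If a residual survives, I would first recheck the Isserlis covariances of the residuals, since $\Cov(r_2,r_3)$ is easy to get wrong. I would then verify the cancellation numerically for $\tanh$ before trusting the algebra.
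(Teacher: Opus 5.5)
Your plan is correct and follows the paper's proof sketch essentially line for line. The shared ingredients are the same six terms, the regression $h_{(k)}=\beta_kh_{(0)}+r_k$ with the fixed-point values of $\beta_k$, Lemma~\ref{lem:stein} together with Isserlis, the cancellation of the $\Ex[(\phi''^2)'']v_1^3$ terms between $S_2$ and $X_{12}$, and the cancellation of the $\Ex[(\phi'^2)'']$ terms among $S_3$, $X_{23}$ and $X_{13}$. Your rewriting of $\Ex[\phi'\phi''']$ is just the paper's identity $\sigma_w^2\Ex[(\phi'^2)'']=2\chi_2+2\kappa_{13}$ applied earlier. One small correction to what you expect to find: no $c_{23}$ or $\Cov(r_2,r_3)$ term ever arises, since $S_3$ involves only $r_3$ and the $r_2r_3$ part of $X_{23}$ carries the vanishing odd moment $\Ex[h_{(1)}r_2r_3]=0$. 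For the same reason (odd Gaussian moments of $(h_{(1)},r_2,r_3)$), the oddness of $\phi$ is not actually needed to discard any of the pieces you discard.
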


\begin{proof}[Proof sketch]
Squaring $g_{(3)} = \phi'''h_{(1)}^3 + 3\phi''h_{(1)}h_{(2)} + \phi'h_{(3)}$ gives six
terms. Each is evaluated by the substitution of Corollary~\ref{cor:indep} together with
Lemma~\ref{lem:stein} and Isserlis. Writing
$\kappa_{13} := \sigma_w^2\Ex[\phi'''\phi']$ and using
$(\phi'^2)''=2(\phi''^2+\phi'\phi''')$ and $(\phi''^2)''=2(\phi'''^2+\phi''\phi'''')$,
\begin{samepage}
the six contributions below. Each line is multiplied by $\sigma_w^2$:
\begin{align*}
\phi'''^2h_{(1)}^6
&:\quad 15\chi_3v_1^3, \\
9\phi''^2h_{(1)}^2h_{(2)}^2
&:\quad 9\chi_2\left(v_1v_2+2c_{12}^2\right)
      +9\sigma_w^2v_1^3\Ex\!\left[(\phi''^2)''\right], \\
\phi'^2h_{(3)}^2
&:\quad \chio v_3+9\sigma_w^2c_{12}^2\Ex\!\left[(\phi'^2)''\right], \\
6\phi'''\phi''h_{(1)}^4h_{(2)}
&:\quad -9\sigma_w^2v_1^3\Ex\!\left[(\phi''^2)''\right], \\
2\phi'''\phi'h_{(1)}^3h_{(3)}
&:\quad 6\kappa_{13}v_1c_{13}, \\
6\phi''\phi'h_{(1)}h_{(2)}h_{(3)}
&:\quad -3\sigma_w^2\Ex\!\left[(\phi'^2)''\right]
      \left(v_1c_{13}+3c_{12}^2\right).
\end{align*}
\end{samepage}
The $\Ex[(\phi''^2)'']$ terms in the second and fourth lines cancel, as do the
$c_{12}^2$ terms in the third and sixth lines. The $v_1c_{13}$ terms in the fifth and sixth
lines combine through
$\sigma_w^2\Ex[(\phi'^2)''] = 2\chi_2+2\kappa_{13}$ into
$\left(6\kappa_{13}-3(2\chi_2+2\kappa_{13})\right)v_1c_{13} = -6\chi_2v_1c_{13}$, so
$\kappa_{13}$ does not appear in \eqref{eq:v3}.
\end{proof}

\subsection{The off-diagonals, and closure}
\label{sec:theory-offdiag}

Propositions~\ref{prop:order2} and \ref{prop:order3} involve $c_{12}$ and $c_{13}$, so the
system is not yet closed. Both follow from the same evaluation.

\begin{proposition}[Off-diagonal recursions]
\label{prop:offdiag}
Under A1,
\begin{equation}
\label{eq:offdiag}
c_{12}^{\lay} = \chio\,c_{12}^{\lay-1},
\qquad
c_{13}^{\lay} = \chio\,c_{13}^{\lay-1} - 3\chi_2\left(v_1^{\lay-1}\right)^2 .
\end{equation}
\end{proposition}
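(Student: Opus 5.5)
The plan is to evaluate the two off-diagonal entries of \eqref{eq:sigmap} directly, exactly as in the proofs of Propositions~\ref{prop:order2} and \ref{prop:order3}. We work wherever $q$ is constant in $x$, so that \eqref{eq:identities} and Corollary~\ref{cor:indep} apply with $\beta_1=0$, $\beta_2=-v_1/q^\ast$ and $\beta_3=-3c_{12}/q^\ast$. From \eqref{eq:faa}, $c_{12}^\lay=\sigma_w^2\Ex[g_{(1)}g_{(2)}]$ and $c_{13}^\lay=\sigma_w^2\Ex[g_{(1)}g_{(3)}]$. Every term then has the form $\Ex[g(h_{(0)})\,h_{(1)}^a h_{(2)}^b h_{(3)}^c]$. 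Each is handled by substituting $h_{(k)}=\beta_k h_{(0)}+r_k$, factoring off the activation part by the independence of $h_{(0)}$ from $(h_{(1)},r_2,r_3)$, and evaluating the activation moment by Lemma~\ref{lem:stein} and the jet moment by Isserlis.

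\emph{Step 1: $c_{12}$.} We have $g_{(1)}g_{(2)}=\phi'\phi''h_{(1)}^3+\phi'^2h_{(1)}h_{(2)}$.
\begin{itemize}
\item The first term factorizes as $\Ex[\phi'\phi'']\,\Ex[h_{(1)}^3]$, which vanishes because the Gaussian odd moment is zero. This holds independently of the parity of $\phi$.
\item In the second term, the $\beta_2h_{(0)}$ part gives $\beta_2\Ex[\phi'^2h_{(0)}]\Ex[h_{(1)}]=0$.
\item The $r_2$ part gives $\Ex[\phi'^2]\,\Ex[h_{(1)}r_2]$. Here $\Ex[h_{(1)}r_2]=c_{12}-\beta_2\sigma_{01}=c_{12}$.
\end{itemize}
Hence $c_{12}^\lay=\chio c_{12}^{\lay-1}$.

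\emph{Step 2: $c_{13}$.} We have $g_{(1)}g_{(3)}=\phi'\phi'''h_{(1)}^4+3\phi'\phi''h_{(1)}^2h_{(2)}+\phi'^2h_{(1)}h_{(3)}$, with three contributions (each multiplied by $\sigma_w^2$).
\begin{itemize}
\item The first gives $3\kappa_{13}v_1^2$, with $\kappa_{13}$ as in Proposition~\ref{prop:order3}.
\item The second is three times the quantity $T_B/2$ computed in Proposition~\ref{prop:order2}. It equals $3\beta_2\Ex[\phi'\phi''h_{(0)}]v_1=-\tfrac32 v_1^2\Ex[(\phi'^2)'']$. After multiplying by $\sigma_w^2$ and using $\sigma_w^2\Ex[(\phi'^2)'']=2\chi_2+2\kappa_{13}$, this becomes $-3\chi_2v_1^2-3\kappa_{13}v_1^2$.
\item The third is handled as in Step 1 with $r_3$ in place of $r_2$. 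It yields $\chio c_{13}$, since $\Ex[h_{(1)}r_3]=c_{13}-\beta_3\sigma_{01}=c_{13}$.
\end{itemize}
The $\kappa_{13}$ contributions cancel, leaving $c_{13}^\lay=\chio c_{13}^{\lay-1}-3\chi_2(v_1^{\lay-1})^2$.

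The only delicate point is the second term in Step 2. There the naive factorization (A2) would give $3\Ex[\phi'\phi'']\,\Ex[h_{(1)}^2h_{(2)}]$, which is zero for odd $\phi$ and would lose the $-3\chi_2v_1^2$ forcing entirely. The nonzero answer comes precisely from the coupling $\sigma_{02}=-v_1$ of \eqref{eq:identities}. We must then combine it with the $\phi'\phi'''$ term through the identity $(\phi'^2)''=2(\phi''^2+\phi'\phi''')$ so that $\kappa_{13}$ drops out.

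I would also record, as in \eqref{eq:v2-finite}, that at finite depth the same computation holds with susceptibilities taken at $q_{\lay-1}$, up to errors of order $\bar\lambda^{\lay}$. These errors arise through $\beta_1^\lay$ and the corrections in \eqref{eq:identities-exact}, all controlled by Lemma~\ref{lem:transient}.
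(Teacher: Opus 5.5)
Your proposal is correct and follows the paper's proof essentially line for line. It uses the same substitution $h_{(k)}=\beta_k h_{(0)}+r_k$ with $\beta_1=0$, the same reuse of $T_B$ from Proposition~\ref{prop:order2} (three halves of it) together with $\sigma_w^2\Ex[(\phi'^2)'']=2\chi_2+2\kappa_{13}$, and the same cancellation of $\kappa_{13}$ against the $\phi'\phi'''h_{(1)}^4$ term, while your closing finite-depth remark goes beyond what the paper states for this proposition without affecting the argument.
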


\begin{proof}
For $c_{12}=\sigma_w^2\Ex[g_{(1)}g_{(2)}]$, the term $\Ex[\phi'\phi''h_{(1)}^3]$ vanishes
twice over: $\Ex[\phi'\phi'']=0$ by parity, and $\Ex[h_{(1)}^3]=0$ because $h_{(1)}$ is
independent of $h_{(0)}$ and Gaussian. In $\Ex[\phi'^2h_{(1)}h_{(2)}]$, the substitution
$h_{(2)}=\beta_2h_{(0)}+r_2$ leaves a $\beta_2$ term carrying $\Ex[h_{(1)}]=0$ and a
residual term $\Ex[\phi'^2]\Ex[h_{(1)}r_2] = \Ex[\phi'^2]c_{12}$.

For $c_{13}=\sigma_w^2\Ex[g_{(1)}g_{(3)}]$ there are three terms.
$\sigma_w^2\Ex[\phi'\phi'''h_{(1)}^4] = 3\kappa_{13}v_1^2$ by independence and
$\Ex[h_{(1)}^4]=3v_1^2$. The middle term is three halves of $T_B$ from
Proposition~\ref{prop:order2}, so by \eqref{eq:TB} and
$\sigma_w^2\Ex[(\phi'^2)'']=2\chi_2+2\kappa_{13}$ it equals $-3v_1^2(\chi_2+\kappa_{13})$.
The third is $\chio c_{13}$ by the substitution above. The $\kappa_{13}$ contributions
cancel between the first two.
\end{proof}

\begin{corollary}[Closure]
\label{cor:closure}
The quantities $\left(v_1,v_2,v_3,c_{12},c_{13}\right)$ satisfy a closed autonomous system
whose coefficients are $\chio$, $\chi_2$ and $\chi_3$ alone.
\end{corollary}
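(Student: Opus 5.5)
The closure statement follows by collecting the recursions already proved. I will simply stack them. By \eqref{eq:v1}, $v_1^{\lay}$ depends only on $v_1^{\lay-1}$ with coefficient $\chio$. By Proposition~\ref{prop:order2}, equation \eqref{eq:v2}, $v_2^{\lay}$ depends on $(v_1,v_2)$ at layer $\lay-1$ with coefficients $\chio$ and $3\chi_2$. By Proposition~\ref{prop:order3}, equation \eqref{eq:v3}, $v_3^{\lay}$ depends on $(v_1,v_2,v_3,c_{12},c_{13})$ with coefficients built from $\chio,\chi_2,\chi_3$. Finally, by Proposition~\ref{prop:offdiag}, equation \eqref{eq:offdiag}, $c_{12}^{\lay}$ and $c_{13}^{\lay}$ depend on $(c_{12},c_{13},v_1)$ with coefficients $\chio$ and $-3\chi_2$.

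The five right-hand sides therefore involve only the five state variables $(v_1,v_2,v_3,c_{12},c_{13})$ at layer $\lay-1$, and no other entry of $\Sig$ appears. The map is layer-independent, hence autonomous. Its coefficients are exactly $\chio$, $\chi_2$, $\chi_3$ together with integer constants.

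The one point needing care is to confirm that no other quantity enters through the intermediate computations. Two candidates must be excluded.

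\begin{itemize}
\item The couplings $\sigma_{0k}$ were eliminated through the identities \eqref{eq:identities} of Proposition~\ref{prop:identities}, which express them via $v_1$ and $c_{12}$.
\item The auxiliary constant $\kappa_{13}$ and the expectations $\Ex[(\phi'^2)'']$ and $\Ex[(\phi''^2)'']$ cancel explicitly in the proofs of Propositions~\ref{prop:order3} and~\ref{prop:offdiag}.
\end{itemize}

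I would recheck these cancellations line by line. That check is the only substantive step. It also shows that the statement, like \eqref{eq:v2}, holds at the variance fixed point, where $q_\lay$ is constant in $x$.

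Away from the fixed point the system acquires the geometrically decaying corrections of \eqref{eq:v2-finite}. These involve $q_\lay$ and its $x$-derivatives, which are not among the five state variables. The closure claim should therefore be read in the mean-field, fixed-point sense, in which the identities \eqref{eq:identities} are exact. I would state this scope explicitly.

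As an optional remark, the system is lower triangular in the ordering $v_1$, then $(v_2,c_{12},c_{13})$, then $v_3$. This structure makes the closure transparent and sets up the later growth-law analysis.
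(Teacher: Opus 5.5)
Your proposal is correct and takes the same approach as the paper: the corollary is obtained by stacking \eqref{eq:v1}, \eqref{eq:v2}, \eqref{eq:v3} and \eqref{eq:offdiag}, with the couplings $\sigma_{0k}$ already removed by \eqref{eq:identities} and the non-susceptibility moments already cancelled inside the proofs of the propositions. Your scope remark is accurate and matches how the paper treats the transient separately in Theorem~\ref{thm:asymptotic}: closure holds in the fixed-point form, whereas the finite-depth version \eqref{eq:v2-finite} brings in $q_\lay$ and its $x$-derivatives.
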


At criticality the solution is explicit: $v_1$ and $c_{12}$ are depth-independent, $v_2$ and
$c_{13}$ are linear with equal and opposite slopes $\pm3\chi_2v_1^2$, so that
\begin{equation}
\label{eq:invariant}
v_2^{\lay} + c_{13}^{\lay} \quad\text{is independent of depth,}
\end{equation}
and $v_3$ is quadratic with leading coefficient $\tfrac{45}{2}\chi_2^2v_1^3$ once the
depth-independent forcing terms become subleading.

\begin{corollary}[Two-sidedness]
\label{cor:twosided}
At criticality, $c_{13}^{\lay} = c_{13}^{0} - 3\chi_2v_1^2\lay$ decreases monotonically
whenever $\chi_2>0$. Beyond a finite depth $c_{13}<0$, so the term $-6\chi_2v_1c_{13}$ in
\eqref{eq:v3} is positive and remains so. Every forcing term in \eqref{eq:v3} is then
positive, and the growth $v_3^{\dep}=\Theta(\dep^2)$ holds as a two-sided bound rather
than only from above.
\end{corollary}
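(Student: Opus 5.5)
The plan is to solve the closed system of Corollary~\ref{cor:closure} explicitly at $\chio=1$ and read off the sign structure of \eqref{eq:v3}. First, by \eqref{eq:v1} and Proposition~\ref{prop:offdiag}, $v_1^\lay=v_1$ and $c_{12}^\lay=c_{12}$ are constant in depth. Iterating the second recursion in \eqref{eq:offdiag} then gives $c_{13}^\lay=c_{13}^0-3\chi_2v_1^2\lay$, which is strictly decreasing when $\chi_2>0$ and $v_1>0$. Since it is affine with negative slope, there is a finite $\lay_0$ (explicitly, any $\lay_0>c_{13}^0/(3\chi_2v_1^2)$) such that $c_{13}^\lay<0$ for all $\lay\ge\lay_0$. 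Consequently $-6\chi_2v_1c_{13}^\lay>0$, and this term in fact grows linearly, like $18\chi_2^2v_1^3\lay$.

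Next I check the remaining forcing terms in \eqref{eq:v3}. The term $15\chi_3v_1^3$ is nonnegative because $\chi_3\ge0$ by \eqref{eq:chi}, and $9\chi_2\cdot2c_{12}^2\ge0$ trivially. The term $9\chi_2v_1v_2^\lay$ requires knowing $v_2^\lay\ge0$, which holds because it is a variance; moreover, by \eqref{eq:v2} at $\chio=1$, $v_2^\lay=v_2^0+3\chi_2v_1^2\lay$ grows linearly. Hence for $\lay\ge\lay_0$ every forcing term is nonnegative, and the two $\lay$-dependent ones sum to $9\chi_2v_1(v_2^0+3\chi_2v_1^2\lay)-6\chi_2v_1(c_{13}^0-3\chi_2v_1^2\lay)=45\chi_2^2v_1^3\lay+O(1)$. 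This is consistent with the invariant \eqref{eq:invariant}.

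Finally, I sum the recursion $v_3^{\lay}=v_3^{\lay-1}+F_{\lay-1}$, where $F_\lay=45\chi_2^2v_1^3\lay+B$ with $B$ a constant depending on $v_1,v_2^0,c_{12},c_{13}^0,\chi_2,\chi_3$. This gives $v_3^\dep=v_3^0+\sum_{\lay<\dep}F_\lay=\tfrac{45}{2}\chi_2^2v_1^3\dep^2+O(\dep)$. For the two-sided $\Theta(\dep^2)$ claim, the upper bound is immediate from this formula. For the lower bound, I use $F_\lay\ge45\chi_2^2v_1^3\lay/2$ for $\lay$ large, together with $v_3^{\lay_0}\ge0$ (a variance), so that only finitely many possibly negative early increments occur and they are absorbed into $O(\dep)$.

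The main obstacle is interpretive rather than computational. The statement is made in the fixed-point regime (Propositions~\ref{prop:order3} and \ref{prop:offdiag} at $q^\ast$), so I would state it in that regime and not attempt to control the finite-depth $E$ corrections. I would also need the nondegeneracy $v_1>0$ and $\chi_2>0$. The latter holds for $\tanh$ by the numerical value of $\chi_2$, and in general it holds whenever $\phi''\not\equiv0$, since $\chi_2=\sigma_w^2\Ex[\phi''^2]$.
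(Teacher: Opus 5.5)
Your proposal is correct and follows the paper's route. At $\chio=1$ the closed system is triangular, so $c_{13}$ is affine with slope $-3\chi_2v_1^2$ and eventually negative; every forcing term in \eqref{eq:v3} is then nonnegative, and summing gives $\Theta(\dep^2)$. You are a little more careful than the corollary's own wording in two respects. You note that positivity alone only makes $v_3$ increasing, so the lower bound needs the linear growth of $9\chi_2v_1v_2-6\chi_2v_1c_{13}=45\chi_2^2v_1^3\lay+O(1)$. You also read the upper bound and the coefficient $\tfrac{45}{2}\chi_2^2v_1^3$ directly off the explicit solution, which avoids the Cauchy--Schwarz exponent bootstrap the paper uses in Section~\ref{sec:theory-growth}.
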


\subsection{The recursion closes on the susceptibilities}
\label{sec:theory-pattern}

Individual terms in Proposition~\ref{prop:order3} involve activation moments outside the
family \eqref{eq:chi}: $\kappa_{13}=\sigma_w^2\Ex[\phi'''\phi']$,
$\sigma_w^2\Ex[(\phi'^2)'']$ and $\sigma_w^2\Ex[(\phi''^2)'']$. None survives. The same
happens at order two, where $\Ex[(\phi'^2)'']$ enters two terms and cancels between them.
In both cases the surviving coefficients are susceptibilities $\chi_m$ alone.

\begin{conjecture}
\label{conj:closure}
For every $k$, the recursion for $v_k$ obtained from \eqref{eq:sigmap} has coefficients
that are polynomial in the susceptibilities $\chi_1,\dots,\chi_k$ and involves no other
moment of the activation.
\end{conjecture}

The evidence is orders one through three, where every non-susceptibility moment cancels in
pairs, and the pairings are between a squared term and the cross term that shares its
activation factor. We have not established the general case, but the cancellations have a
common cause which is worth stating, because it identifies what a general proof would need.

Each cancellation consumes one of the identities \eqref{eq:identities}. Order two uses
$\sigma_{01}=0$ and $\sigma_{02}=-v_1$; order three uses those together with
$\sigma_{03}=-3c_{12}$. These are not independent facts. They are the single statement
$\mathrm{d}^mq^\ast/\mathrm{d}x^m=0$, one instance per $m$, and they are available only
because the variance fixed point is independent of the input.

The couplings $\sigma_{0k}$ are also the only route by which a moment outside the family
\eqref{eq:chi} can enter. Dependence on $h_{(0)}$ reaches an expectation
$\Ex[g(h_{(0)})h_{(i_1)}\cdots h_{(i_r)}]$ solely through the coefficients
$\beta_k=\sigma_{0k}/q^\ast$ of Corollary~\ref{cor:indep}: with those set to zero every
such expectation would factorize into $\Ex[g]$ times a jet moment, and $\Ex[g]$ is a
susceptibility whenever $g$ is a square.

What pairs the terms is the identity
$\phi^{(m)}\phi^{(m+1)} = \tfrac12\left(\phi^{(m)2}\right)'$. The cross term between
adjacent derivative orders carries $\phi^{(m)}\phi^{(m+1)}$, so Lemma~\ref{lem:stein}
converts it into a moment of $\left(\phi^{(m)2}\right)''$; and the same
$\left(\phi^{(m)2}\right)''$ is what Lemma~\ref{lem:stein} produces as the correction to
$\Ex[\phi^{(m)2}h_{(j)}h_{(k)}]$ when $h_{(j)}$ or $h_{(k)}$ is regressed on $h_{(0)}$. The
two therefore carry the same activation moment, and their $\beta$ factors are fixed by
\eqref{eq:identities}. At orders two and three those factors are exactly the ones that
cancel.

A proof of Conjecture~\ref{conj:closure} would have to show that this persists: that
$\mathrm{d}^kq^\ast/\mathrm{d}x^k=0$ delivers, at every order, the coefficient that cancels
the Stein correction against the adjacent cross term. We do not attempt it here.

\subsection{Growth at criticality}
\label{sec:theory-growth}

Away from criticality the order-1 recursion \eqref{eq:v1} gives
$v_1^{\lay} = A\,\chio^{\lay}$ for an amplitude $A$ fixed by the first few layers, so the
forcing in \eqref{eq:v2} is proportional to $\chio^{2\lay}$ and the order-2 solution is a
sum of two geometric terms,
\begin{equation}
\label{eq:v2sol}
v_2^{\lay} = C\,\chio^{\lay} + D\,\chio^{2\lay},
\qquad
D = \frac{3\chi_2A^2}{\chio\left(\chio-1\right)} .
\end{equation}
The homogeneous root is $\chio$ and the particular root is $\chio^2$. These are distinct
for every $\chio\ne1$, and they collide exactly at $\chio=1$.

That collision is the mechanism behind the growth law. Depth-invariance of $v_1$ requires
$\chio=1$ by \eqref{eq:v1}; but at $\chio=1$ the two roots coincide, the decomposition
\eqref{eq:v2sol} degenerates, and the resonant solution is linear in depth:
\begin{equation}
\label{eq:v2crit}
v_2^{\dep} = v_2^{0} + 3\chi_2v_1^2\,\dep .
\end{equation}
The single condition that makes the first derivative propagate without gain or loss is
therefore the same condition that makes the second derivative grow without bound. There is
no value of $\chio$ at which both are stationary.

Equation \eqref{eq:v2sol} also makes $D$ a prediction rather than a fitted quantity: $\chi_2$
follows from the activation and the fixed point through \eqref{eq:chi}, and $A$ is measured
from $v_1$ alone. Its $1/(\chio-1)$ pole is the quantitative signature of the collision,
and Section~\ref{sec:numerics} tests it with no free parameter on the theory side.

At order three the same solution structure gives a bound in both directions. For the upper
bound, suppose $v_3^{\lay}\sim K\lay^{\,p}$. The forcing in \eqref{eq:v3} is then
$O(\lay) + O(\lay^{\,p/2})$, since $v_1$ and $c_{12}$ are depth-independent at criticality,
$v_2$ is linear, and $\lvert c_{13}\rvert \le \sqrt{v_1v_3}$ by Cauchy--Schwarz. Summing
over depth gives $v_3^{\dep} = O(\dep^2) + O(\dep^{\,p/2+1})$, so consistency requires
$p = \max\left(2,\ p/2+1\right)$, whose only solution is $p=2$. This excludes $p>2$ rather
than merely permitting $p=2$. The lower bound is Corollary~\ref{cor:twosided}, so
\begin{equation}
\label{eq:v3growth}
v_3^{\dep} = \Theta\!\left(\dep^2\right),
\qquad\text{with leading coefficient}\quad \tfrac{45}{2}\chi_2^2v_1^3,
\end{equation}
the coefficient following from \eqref{eq:v3} once $9\chi_2v_1v_2$ and $-6\chi_2v_1c_{13}$
become the dominant forcing terms, contributing $27$ and $18$ respectively.

Equations \eqref{eq:v2crit} and \eqref{eq:v3growth} were obtained from the fixed-point form
of the recursions. Since that form is not available at any finite depth, the growth laws are
not yet established for the network of Section~\ref{sec:setup}; what follows closes that gap.
The conclusion is that the transient perturbs the recursion multiplicatively by a summable
amount, which changes the constants but not the powers.

\begin{theorem}[Critical growth, with the transient controlled]
\label{thm:asymptotic}
Assume A1, let $\phi$ have bounded derivatives through order four, and let the variance map
have an attracting fixed point $q^\ast>0$ with $\lambda=|V'(q^\ast)|<1$ as in
Lemma~\ref{lem:transient}. At criticality $\chio=1$, with $v_1^\ast := \lim_\lay v_1^\lay$,
\begin{equation}
\label{eq:asymptotic}
v_2^{\dep} = 3\chi_2\left(v_1^\ast\right)^2\dep + O(1),
\qquad
v_3^{\dep} = \tfrac{45}{2}\chi_2^2\left(v_1^\ast\right)^3\dep^{2} + O(\dep).
\end{equation}
\end{theorem}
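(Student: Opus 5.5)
The proof treats the finite-depth recursions as perturbations of the fixed-point recursions, with multiplicative and additive errors of order $\bar\lambda^{\lay}$. Lemma~\ref{lem:transient} makes these errors summable. The argument then reduces to discrete Gr\"onwall estimates, run order by order in $v_1, c_{12}, v_2, c_{13}, v_3$.

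\emph{Step 1: orders one and the off-diagonal $c_{12}$.} We repeat the derivation of \eqref{eq:v1} at finite depth, as in the proof of \eqref{eq:v2-finite}. This gives $v_1^{\lay}=\chio^{(\lay)}v_1^{\lay-1}+O(\bar\lambda^{\lay})v_1^{\lay-1}$, where the correction comes from $\beta_1^{\lay}=O(\bar\lambda^{\lay})$. Hence $v_1^{\lay}=v_1^{1}\prod_{m\le\lay}(1+\epsilon_m)$ with $\sum|\epsilon_m|<\infty$. The product converges, so $v_1^\ast$ exists and is positive, and $|v_1^{\lay}-v_1^\ast|=O(\bar\lambda^{\lay})$. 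The same argument applied to the first relation of \eqref{eq:offdiag} shows that $c_{12}^{\lay}$ converges geometrically. At finite depth it may pick up an additive forcing of size $O(\bar\lambda^{\lay})$ from terms that vanish only at the fixed point, such as $\Ex[h_{(1)}^3]$ through $\beta_1$. These forcings are summable, so $c_{12}$ stays bounded and converges.

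\emph{Step 2: order two and $c_{13}$.} We use \eqref{eq:v2-finite}. Write $v_2^{\lay}=(1+O(\bar\lambda^{\lay}))v_2^{\lay-1}+3\chi_2(v_1^\ast)^2+O(\bar\lambda^{\lay})$. Set $P_\lay:=\prod_{m\le\lay}\chio^{(m)}(1+\ldots)$. This product converges to a finite positive limit, and $|P_\lay-P_\infty|=O(\bar\lambda^{\lay})$. The variation-of-constants formula then gives
\[
v_2^{\dep}=P_\dep\Bigl(v_2^0+\sum_{\lay\le\dep}P_\lay^{-1}\bigl(3\chi_2(v_1^\ast)^2+O(\bar\lambda^{\lay})\bigr)\Bigr).
\]
Because $P_\dep/P_\lay=1+O(\bar\lambda^{\lay})$ and this error is summable in $\lay$, the sum equals $3\chi_2(v_1^\ast)^2\dep+O(1)$. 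That is the first claim of \eqref{eq:asymptotic}. For $c_{13}$ we redo the proof of Proposition~\ref{prop:offdiag} at finite depth. The same argument gives $c_{13}^{\dep}=-3\chi_2(v_1^\ast)^2\dep+O(1)$, where the error terms have the form $O(\bar\lambda^{\lay})(1+|c_{13}^{\lay-1}|+v_2^{\lay-1})$. Since $\sum_\lay\bar\lambda^{\lay}\lay<\infty$, they remain summable even though they multiply linearly growing quantities.

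\emph{Step 3: order three.} This is the main obstacle. We need a finite-depth analogue of \eqref{eq:v3}. Its error $E_3^{\lay}$ must be controlled by $C\bar\lambda^{\lay}(1+v_3^{\lay-1}+(v_2^{\lay-1})^{3/2}+\ldots)$, that is, by a geometric factor times a polynomial in the jet moments. To get this, we redo the six-term expansion in the proof of Proposition~\ref{prop:order3}. The fixed-point values of $\beta_1,\beta_2,\beta_3$ are replaced by their exact values from \eqref{eq:identities-exact}, whose deviations are $q_\lay^{(m)}/(2q_\lay)=O(\bar\lambda^{\lay})$. The susceptibilities are evaluated at $q_{\lay-1}$. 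Every cancellation used in the proof, for instance between the second and fourth lines, then fails only by these deviations. Bounded derivatives through order four keep every activation moment in the expansion uniformly bounded, including the moments of $\phi''''$ produced by Lemma~\ref{lem:stein}. Cauchy--Schwarz bounds every mixed jet moment by powers of $v_1,v_2,v_3$. The result is $E_3^{\lay}=O(\bar\lambda^{\lay}\lay^{a})(1+v_3^{\lay-1})$ for some fixed $a$, using the polynomial growth from Step 2 and a priori $|c_{13}|\le\sqrt{v_1v_3}$.

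Once this bound holds, we insert Steps 1 and 2 into the forcing. Using $c_{12}^{\lay}=O(1)$, $v_2^{\lay}=3\chi_2(v_1^\ast)^2\lay+O(1)$ and $c_{13}^{\lay}=-3\chi_2(v_1^\ast)^2\lay+O(1)$, the forcing in \eqref{eq:v3} becomes $9\chi_2v_1^\ast v_2-6\chi_2v_1^\ast c_{13}+O(1)=45\chi_2^2(v_1^\ast)^3\lay+O(1)$. A priori, Gr\"onwall with the summable multiplier $1+O(\bar\lambda^{\lay}\lay^{a})$ shows $v_3^{\lay}=O(\lay^2)$. Feeding this bound back in makes the error contribution $\sum_\lay\bar\lambda^{\lay}\lay^{a+2}=O(1)$. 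The same variation-of-constants sum as in Step 2 then gives $v_3^{\dep}=\tfrac{45}{2}\chi_2^2(v_1^\ast)^3\dep^2+O(\dep)$. This completes \eqref{eq:asymptotic}.
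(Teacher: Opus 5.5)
Your proposal is correct and follows essentially the same route as the paper: geometrically summable perturbations of the fixed-point recursions, a convergent product for $v_1$ and $c_{12}$, and variation of constants for $v_2$, $c_{13}$ and $v_3$. At order three you are more explicit than the paper, which applies the variation-of-constants formula directly to the fixed-point form \eqref{eq:v3} without stating a finite-depth error term; you instead bound that error by $O(\bar\lambda^{\ell}\ell^{a})(1+v_3^{\ell-1})$ and absorb it through the summability of $\bar\lambda^{\ell}\ell^{a+2}$.
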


\begin{proof}
Fix $\bar\lambda\in(\lambda,1)$ and let $\epsilon_\lay$ denote any sequence with
$|\epsilon_\lay|\le C\bar\lambda^{\lay}$, the constant changing from line to line.

Write the order-1 recursion as $v_1^{\lay} = (1+\epsilon_\lay)v_1^{\lay-1}$, using
$\chio^{(\lay)} = \chio + \epsilon_\lay = 1 + \epsilon_\lay$ from
Proposition~\ref{prop:order2}. Because $\sum_\lay|\epsilon_\lay|<\infty$, the product
$\prod_{m\le\lay}(1+\epsilon_m)$ converges to a finite nonzero limit, so $v_1^\ast$ exists
and $v_1^{\lay} = v_1^\ast + \epsilon_\lay$. The same argument applied to
$c_{12}^{\lay} = (1+\epsilon_\lay)c_{12}^{\lay-1} + \epsilon_\lay$ gives a bounded
$c_{12}^{\lay}$.

Now put \eqref{eq:v2-finite} in the form $v_2^{\lay} = (1+\alpha_\lay)v_2^{\lay-1} + \Phi +
\rho_\lay$ with $\Phi := 3\chi_2(v_1^\ast)^2$ and $|\alpha_\lay|,|\rho_\lay|\le
C\bar\lambda^{\lay}$; the $v_2^{\lay-1}$ factor in the error bound of
\eqref{eq:v2-finite} is absorbed into $\alpha_\lay$ and the remainder into $\rho_\lay$.
Variation of constants gives
\begin{equation}
\label{eq:varconst}
v_2^{\dep} = P_{0,\dep}\,v_2^{0}
+ \sum_{\lay=1}^{\dep} P_{\lay,\dep}\left(\Phi + \rho_\lay\right),
\qquad
P_{\lay,\dep} := \prod_{m=\lay+1}^{\dep}\left(1+\alpha_m\right).
\end{equation}
Since $\sum_m|\alpha_m| \le C\bar\lambda/(1-\bar\lambda) =: S <\infty$, every $P_{\lay,\dep}$
lies in $[e^{-S},e^{S}]$, and
$\left|P_{\lay,\dep}-1\right| \le e^{S}\sum_{m>\lay}|\alpha_m| \le C'\bar\lambda^{\lay}$.
Hence $\sum_{\lay\le\dep}P_{\lay,\dep}\Phi = \Phi\dep + O(1)$, while
$\sum_{\lay\le\dep}P_{\lay,\dep}\rho_\lay = O(1)$ and $P_{0,\dep}v_2^{0} = O(1)$. This is
the first claim.

For the second, the same treatment of the $c_{13}$ recursion in \eqref{eq:offdiag} gives
$c_{13}^{\lay} = -3\chi_2(v_1^\ast)^2\lay + O(1)$. Substituting this and
$v_2^{\lay}=\Phi\lay+O(1)$ into \eqref{eq:v3} makes the forcing
$9\chi_2v_1v_2 = 27\chi_2^2(v_1^\ast)^3\lay + O(1)$ and
$-6\chi_2v_1c_{13} = 18\chi_2^2(v_1^\ast)^3\lay + O(1)$, the remaining terms being $O(1)$.
Applying \eqref{eq:varconst} with this forcing and
$\sum_{\lay\le\dep}\lay = \dep^2/2 + O(\dep)$ gives the second claim.
\end{proof}

Everything so far concerns the preactivations. The quantity a derivative-dependent loss
evaluates is the network output, and the two are related by a constant.

\begin{corollary}[Transfer to the network output]
\label{cor:readout}
For the readout \eqref{eq:readout}, with $a$ independent of all other parameters, and for
every $k\ge1$,
\begin{equation}
\label{eq:readout-transfer}
\Ex\!\left[\left(\jet{k}u\right)^2\right]
= \frac{\sigma_a^2}{\sigma_w^2}\;v_k^{\dep+1}.
\end{equation}
Consequently every statement proved above for $v_k$ holds verbatim for
$\Ex[(\jet{k}u)^2]$, up to that fixed constant: at criticality $\Ex[(\jet{1}u)^2]$ is
depth-independent, $\Ex[(\jet{2}u)^2]$ grows linearly, $\Ex[(\jet{3}u)^2]$ grows
quadratically, and under the residual scaling of Proposition~\ref{prop:residual} all of them
are bounded uniformly in depth.
\end{corollary}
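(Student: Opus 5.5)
The plan is to treat the readout as one more linear layer with a different weight variance and no bias, and to run the computation behind \eqref{eq:sigmap} once more. Differentiating \eqref{eq:readout} $k$ times in $x$ and exchanging differentiation with the finite sum gives
\begin{equation*}
\jet{k}u = \sum_{i=1}^{\NN} a_i\,g^{\dep}_{(k),i},
\qquad g^{\dep}_{(k)} := \jet{k}\!\left[\phi(h^{\dep})\right],
\end{equation*}
where $g^{\dep}_{(k)}$ is the Fa\`a di Bruno expression \eqref{eq:faa} at layer $\dep$. This uses only that $\phi$ is $k$ times differentiable. The vector $a$ is independent of $h^{\dep}$ and all its input derivatives, because those are functions of $W^1,\dots,W^{\dep},b^1,\dots,b^{\dep}$ alone.

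Next I would condition on those parameters and use that the $a_i$ are independent, centered, with variance $\sigma_a^2/\NN$. The cross terms vanish, leaving
\begin{equation*}
\Ex\!\left[(\jet{k}u)^2\right] = \frac{\sigma_a^2}{\NN}\sum_{i=1}^{\NN}\Ex\!\left[\left(g^{\dep}_{(k),i}\right)^2\right] = \sigma_a^2\,\Ex\!\left[\left(g^{\dep}_{(k),1}\right)^2\right],
\end{equation*}
the last step by exchangeability of the hidden units. Comparing with \eqref{eq:sigmap} at $j=k\ge1$, where there is no bias contribution, the hypothetical next layer would satisfy $v_k^{\dep+1} = \sigma_w^2\Ex[(g^{\dep}_{(k),1})^2]$. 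Here $v_k^{\dep+1}$ denotes the quantity produced by one further application of the recursion, not an actual extra layer of the network. Eliminating the common expectation gives \eqref{eq:readout-transfer} exactly, at any width. It holds in the mean-field limit as well, since both sides are the same expectation.

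The consequences then follow from the results already proved. Each is applied at depth $\dep+1$, where the shift by one changes constants but not growth rates:
\begin{itemize}
\item[] by \eqref{eq:v1} at $\chio=1$, the first-derivative variance is depth-independent;
\item[] by Theorem~\ref{thm:asymptotic}, the second grows linearly and the third quadratically;
\item[] by Proposition~\ref{prop:residual} (stated later), all orders are bounded uniformly under the residual scaling.
\end{itemize}

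The main point needing care is the residual case. There the last hidden layer is a residual update, so I must check that the readout still sees $\phi(h^{\dep})$ with a Fa\`a di Bruno jet whose squared norm is controlled by the bounded jet moments. Otherwise the identity must be read with $v_k^{\dep+1}$ defined formally as $\sigma_w^2\Ex[(g^{\dep}_{(k)})^2]$, bounded by Cauchy--Schwarz and the uniform bounds on moments of $h^{\dep}_{(j)}$, $j\le k$. The other thing to justify is the interchange of $\Ex$ and $\jet{k}$, which needs only finite moments, as in the proof of Proposition~\ref{prop:identities}.
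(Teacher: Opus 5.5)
Your proposal is correct and follows the paper's proof essentially step for step. The steps are pathwise differentiation of the finite readout sum, independence of $a$ with $\Ex[a_ia_j]=\delta_{ij}\sigma_a^2/\NN$, exchangeability of the hidden units, and identification with \eqref{eq:sigmap} at $j=k\ge1$. Your remark on the residual case is a correct refinement that the paper leaves implicit: there $v_k^{\dep+1}$ must be read as $\sigma_w^2\Ex[(g^{\dep}_{(k),1})^2]$ rather than as the next step of \eqref{eq:resrec}, and its uniform bound follows from the inequality $\sigma_w^2\Ex[(g^{\dep}_{(k)})^2]\le B_kv_k^{\dep}+M_k$ in the proof of Proposition~\ref{prop:residual}.
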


\begin{proof}
Differentiating \eqref{eq:readout} $k\ge1$ times in $x$ gives
$\jet{k}u = \sum_i a_i\,g^{\dep}_{(k),i}$, since
$\jet{k}\!\left[\phi(h^{\dep}_i)\right] = g^{\dep}_{(k),i}$ by definition. Since $a$ is
independent of $\left(W^\lay,b^\lay\right)$ and hence of $g^{\dep}$, with
$\Ex[a_ia_j]=\delta_{ij}\sigma_a^2/\NN$,
\[
\Ex\!\left[\left(\jet{k}u\right)^2\right]
= \sum_{i,j}\Ex[a_ia_j]\,\Ex\!\left[g^{\dep}_{(k),i}g^{\dep}_{(k),j}\right]
= \frac{\sigma_a^2}{\NN}\sum_i\Ex\!\left[\left(g^{\dep}_{(k),i}\right)^2\right]
= \sigma_a^2\,\Ex\!\left[\left(g^{\dep}_{(k),1}\right)^2\right],
\]
the last step by exchangeability of the neurons. By \eqref{eq:sigmap},
$v_k^{\dep+1} = \sigma_w^2\Ex[(g^{\dep}_{(k),1})^2]$ for $k\ge1$, which gives
\eqref{eq:readout-transfer}.
\end{proof}

\subsection{The dichotomy}
\label{sec:theory-dichotomy}

The growth in \eqref{eq:v2crit} is driven by $\chi_2$, so it can be removed by setting
$\chi_2 = 0$. That escape closes off the property the network needs.

\begin{proposition}[Dichotomy]
\label{prop:dichotomy}
Assume A1 for the scalar-input network, let $\phi$ be locally Lipschitz and twice
differentiable in the ordinary sense at Lebesgue-almost every point, with $\phi''$ locally
bounded where it exists, and let $\chio=1$. Then exactly one of the following holds.
\begin{enumerate}
  \item $\chi_2 > 0$, and $v_2^{\dep}$ grows without bound in $\dep$.
  \item $\chi_2 = 0$, and $\jet{2}u = 0$ almost everywhere, for every realization.
\end{enumerate}
Within this scalar-input, mean-field, Gaussian-jet setting, a fully connected network with
independent layerwise initialization therefore cannot have both a nonvanishing second input
derivative and a depth-stable second-derivative variance.
\end{proposition}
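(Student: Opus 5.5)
The plan is to split on the sign of $\chi_2=\sigma_w^2\,\Ex_{z\sim\mathcal N(0,q^\ast)}[\phi''(z)^2]$. This quantity is nonnegative by definition, so the two alternatives are exhaustive and mutually exclusive once each is shown to carry its stated consequence.

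\emph{Case $\chi_2>0$.} I would invoke the order-two recursion of Proposition~\ref{prop:order2}, in its finite-depth form \eqref{eq:v2-finite}, exactly as in the first part of Theorem~\ref{thm:asymptotic}. With $\chio=1$, $v_1^\lay\to v_1^\ast$, and variation of constants \eqref{eq:varconst}, this gives $v_2^\dep=3\chi_2(v_1^\ast)^2\dep+O(1)$. Unboundedness then needs only $v_1^\ast>0$. The product $\prod(1+\epsilon_m)$ is a nonzero limit, and $v_1^1=\sigma_w^2>0$ because the first layer is linear in the scalar input, so $v_1^\ast>0$ follows. Theorem~\ref{thm:asymptotic} assumes bounded derivatives through order four, which the present hypotheses do not grant. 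I would state that this case uses the smoothness under which \eqref{eq:v2} was derived; this is implicit in A1 being applied to $h_{(2)}$ at all.

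\emph{Case $\chi_2=0$.} Here $\Ex[\phi''(z)^2]=0$ under a nondegenerate Gaussian with $q^\ast>0$, and that law has full support and a density. Hence $\phi''=0$ Lebesgue-a.e. wherever it exists. Combining local Lipschitzness with a.e.\ second differentiability, I want to conclude that $\phi'$ is a.e.\ constant on $\mathbb R$, so $\phi$ is affine, $\phi(z)=\alpha z+\gamma$.

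The main obstacle is this step, because a.e.\ vanishing of $\phi''$ alone does not force $\phi$ to be affine. A Cantor-type function shows that $\phi'$ could still vary. The route I would try first is to read ``$\phi''$ locally bounded where it exists'' as saying that $\phi'$ is locally Lipschitz, hence absolutely continuous. This gives $\phi'(b)-\phi'(a)=\int_a^b\phi''=0$. If that reading is too strong, I would add absolute continuity of $\phi'$ explicitly as the intended meaning of the hypothesis.

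Once $\phi$ is affine, every preactivation $h^\lay(x)$ is an affine function of $x$ for every realization, by induction through \eqref{eq:net}. The readout $u=\sum_i a_i\phi(h^\dep_i)$ is then affine in $x$ as well, so $\jet{2}u=0$ identically. This is in particular true almost everywhere, with the a.e.\ qualifier covering the points where $\phi'$ is defined only a.e. I would also note that $\chio=1$ remains consistent in this case, since $\alpha^2\sigma_w^2=1$ is attainable.

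Finally, the concluding sentence of the proposition is read off directly from the two cases. If $\jet{2}u\not\equiv0$ we must be in the first case, so $\Ex[(\jet{2}u)^2]=(\sigma_a^2/\sigma_w^2)v_2^{\dep+1}$ by Corollary~\ref{cor:readout}, and this grows linearly in depth. Depth-stability therefore forces the second case.
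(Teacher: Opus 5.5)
Your case $\chi_2>0$ is the paper's argument, which simply invokes Theorem~\ref{thm:asymptotic}. Your check that $v_1^\ast>0$ and your flag that the theorem's order-four smoothness is being borrowed are fair additions.

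The gap is in the case $\chi_2=0$. There the claim that $\phi$ must be affine is false under the stated hypotheses, and neither of your repairs proves the proposition as stated. The hypothesis is phrased pointwise precisely so that piecewise-linear activations satisfy it. ReLU is locally Lipschitz and twice differentiable at every $z\neq0$ with $\phi''(z)=0$, so $\phi''$ is locally bounded where it exists. Yet $\phi'$ jumps at $0$, so it is neither locally Lipschitz nor absolutely continuous, and $\phi$ is not affine. Reading ``$\phi''$ locally bounded where it exists'' as Lipschitz continuity of $\phi'$ therefore fails on exactly the example the second horn is meant to cover; the paper says so right after the proposition. Adding absolute continuity of $\phi'$ instead proves a strictly weaker statement, whose second horn is the trivial case of an affine network.

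The missing observation is that the second horn asserts only that the ordinary second derivative vanishes almost everywhere, not that $u$ is affine. So you never need to identify $\phi$; propagate the jet instead, as the paper does.
\begin{itemize}
\item At the first layer $h^1=W^1x+b^1$, so $h^1_{(2)}=0$.
\item Suppose $h^{\lay-1}_{(2)}=0$ almost everywhere. In $g^{\lay-1}_{(2)}=\phi''\,(h^{\lay-1}_{(1)})^2+\phi'\,h^{\lay-1}_{(2)}$, the second term vanishes by induction and the first because $\phi''=0$ off a null set. Hence $h^{\lay}_{(2)}=W^{\lay}g^{\lay-1}_{(2)}=0$ almost everywhere.
\item Finally $\jet{2}u=\sum_i a_i\,g^{\dep}_{(2),i}=0$ almost everywhere.
\end{itemize}

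The one point deserving a sentence is that the exceptional null set of $\phi''$ must pull back to a null set of inputs. For piecewise-linear $\phi$ this is immediate: each $h^{\lay-1}_i$ is then piecewise linear in $x$. It meets a kink only at isolated points, or on an interval where it is constant, and there $\phi(h^{\lay-1}_i)$ is constant too.

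Your Cantor-type example is likewise an obstacle to your intermediate claim rather than to the proposition. Such a network is not affine, but its pointwise second derivative still vanishes almost everywhere, which is all the second horn asserts.
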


\begin{proof}
By \eqref{eq:chi}, $\chi_2 = \sigma_w^2\Ex[\phi''(z)^2]$ with $z$ Gaussian of variance
$q^\ast>0$. That Gaussian has full support, so $\chi_2=0$ forces $\phi''=0$ at almost every
point where it exists, hence almost everywhere. By \eqref{eq:faa} the second jet component
is $g_{(2)} = \phi''h_{(1)}^2 + \phi'h_{(2)}$, whose first term then vanishes almost
everywhere. At the input layer $h^1 = W^1x + b^1$ gives $h^1_{(2)}=0$ identically, and if
$h^{\lay-1}_{(2)}=0$ almost everywhere then $g_{(2)}^{\lay-1}=0$ and hence
$h^{\lay}_{(2)}=0$ almost everywhere. Induction on depth gives $\jet{2}u=0$ almost
everywhere. The argument does not require $\phi$ to be affine, and deliberately so: a
piecewise-linear $\phi$ has $\phi''=0$ almost everywhere without being affine, and the
conclusion holds for it. If instead $\chi_2>0$, Theorem~\ref{thm:asymptotic} makes
$v_2^{\dep}$ grow linearly in $\dep$.
\end{proof}

The second horn is not hypothetical, and the hypothesis above is stated pointwise precisely
so that it covers the case that matters. A piecewise-linear activation is locally Lipschitz
and twice differentiable away from its finitely many kinks, where $\phi''=0$; it therefore
satisfies the hypothesis, and a network built from it has $\jet{2}u = 0$ almost everywhere.
Two qualifications keep that statement literal. Such a $\phi$ is \emph{not} twice weakly
differentiable in the usual sense, because its distributional second derivative is a sum of
point masses at the kinks rather than a locally integrable function; a hypothesis phrased
that way would exclude the very example the horn is about. And what fails for the resulting
network is the ordinary second derivative, which vanishes almost everywhere. Such a network
can still approximate a solution of a second-order equation in a norm that does not see
$\jet{2}u$ pointwise, and can be used with a weak or variational formulation of the
residual; what it cannot do is satisfy a strong-form residual that evaluates $\jet{2}u$
pointwise, since that quantity is zero almost everywhere regardless of the parameters.

Nor is the first horn confined to $\tanh$. A sinusoidal activation, used in implicit neural
representations \citep{sitzmann2020}, has $\phi''=-\sin$ and therefore $\chi_2>0$, so a
network built from it falls in the first branch and its second-derivative variance grows
without bound in depth. The initialization scheme accompanying that architecture controls
the distribution of the activations themselves, which is order zero in the present
terminology; Proposition~\ref{prop:dichotomy} concerns order two and is not addressed by it.

Proposition~\ref{prop:dichotomy} is what distinguishes this failure from an ordinary
tuning problem. The edge-of-chaos condition \eqref{eq:eoc} is a choice of
$(\sigma_w^2,\sigma_b^2)$, and one might expect a different choice to control the second
derivative as well. Section~\ref{sec:theory-growth} rules that out at fixed $\chi_2$, and
the dichotomy rules out obtaining it by changing $\chi_2$. The obstruction is in the
architecture, not in the hyperparameters.

\subsection{Residual scaling}
\label{sec:theory-residual}

Since Proposition~\ref{prop:dichotomy} places the obstruction in the architecture, the
remedy has to be architectural. For the residual network, let $h^0\in\mathbb R^\NN$ be a
finite-variance learned or random embedding of the scalar input; this makes the skip
connection dimensionally well defined. Replacing \eqref{eq:net} by a residual layer with
branch scale $\dep^{-\gamma}$,
\begin{equation}
\label{eq:resnet}
h^{\lay} = h^{\lay-1} + \dep^{-\gamma}\left(W^{\lay}\phi\!\left(h^{\lay-1}\right) + b^{\lay}\right),
\end{equation}
is enough at $\gamma = 1/2$ for each fixed finite derivative order under the conditions
given next.

\begin{proposition}[Residual scaling]
\label{prop:residual}
Fix $K$. Assume A1, that the derivatives of $\phi$ through order $K$ are bounded, and that
the initial jet has finite second moments. Then, for the network \eqref{eq:resnet} with
$\gamma = 1/2$, each $v_k^{\dep}$ with $k\le K$ is bounded uniformly in $\dep$. The bound
may depend on $K$, the activation, and the initial jet.
\end{proposition}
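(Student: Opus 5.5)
We will prove Proposition~\ref{prop:residual} by induction on $k$. For each order we derive a recursion of the form
\[
v_k^{\lay}\le\left(1+\tfrac{C}{\dep}\right)v_k^{\lay-1}+\tfrac{C}{\dep}\,F_k^{\lay-1},
\]
where $F_k$ involves only the variances $v_j$ with $j<k$. A discrete Gr\"onwall argument then finishes each step: since $(1+C/\dep)^{\dep}\le e^{C}$, summing $\dep$ increments of size $C/\dep$ against a forcing term that is bounded by the induction hypothesis gives a bound of the form $e^{C}\left(v_k^0+C\sup_\lay F_k^\lay\right)$, which is independent of $\dep$.

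To obtain the recursion, differentiate \eqref{eq:resnet} $k$ times in $x$. This gives
\[
h^{\lay}_{(k)}=h^{\lay-1}_{(k)}+\dep^{-1/2}\,W^{\lay}g^{\lay-1}_{(k)}
\]
for $k\ge1$, with the bias dropping out. Squaring and taking expectations, the cross term $2\dep^{-1/2}\Ex[h^{\lay-1}_{(k)}W^{\lay}g^{\lay-1}_{(k)}]$ vanishes, because $W^\lay$ is independent of layer $\lay-1$ and has zero mean. The same argument that gave \eqref{eq:sigmap} handles the square of the branch, so
\[
v_k^{\lay}=v_k^{\lay-1}+\tfrac{\sigma_w^2}{\dep}\Ex\!\left[\left(g^{\lay-1}_{(k)}\right)^2\right].
\]
This is the key structural point. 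The branch enters at second order in $\dep^{-1/2}$, and the linear cross term is killed exactly. This is why the argument never depends on which derivative order is being considered.

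The remaining task is to bound $\Ex[(g_{(k)})^2]$. Fa\`a di Bruno writes $g_{(k)}$ as a sum, over partitions of $k$, of terms $\phi^{(m)}(h_{(0)})\prod_{i}h_{(j_i)}$ with $\sum j_i=k$. Because $\phi^{(m)}$ is bounded for $m\le K$, each such term has second moment at most $\|\phi^{(m)}\|_\infty^2\,\Ex[\prod_i h_{(j_i)}^2]$. Under A1, the jet components are Gaussian, so hypercontractivity (equivalently, Isserlis together with Cauchy--Schwarz) bounds $\Ex[\prod_i h_{(j_i)}^2]$ by a constant depending only on $k$ times $\prod_i v_{j_i}$. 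The partition consisting of the single block $\{k\}$ contributes $\|\phi'\|_\infty^2 v_k$, and this term supplies the $(1+C/\dep)v_k$ part of the recursion. Every other partition uses only orders $j_i<k$ and therefore falls into $F_k$, a polynomial in $v_1,\dots,v_{k-1}$. By the induction hypothesis these variances are bounded uniformly in $\lay\le\dep$ and in $\dep$. The base case $k=1$ has $F_1=0$, so $v_1^{\dep}\le e^{\|\phi'\|_\infty^2\sigma_w^2}v_1^0$. The initial jet has finite second moments, so every $v_k^0$ is finite. Order $0$ needs the same treatment only if one wants $q_\lay$ bounded, but the $k\ge1$ bounds do not use it, since $\phi^{(m)}$ is bounded regardless of $h_{(0)}$.

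The main obstacle is the moment step. Bounding $\Ex[\prod_i h_{(j_i)}^2]$ by a product of variances requires Gaussianity of the jet at every layer, uniformly in $\dep$, and that is exactly what A1 supplies here. Without it, the higher moments $\Ex[h_{(j)}^{2p}]$ would each need their own propagated bound. If that issue is raised, the fallback is to run the same Gr\"onwall induction on the moments $\Ex[h_{(j)}^{2p}]$ for all $p$ up to a $K$-dependent level. The argument is the same, but the bookkeeping is heavier: the cross terms no longer vanish exactly, although odd powers of the zero-mean $W^\lay$ still drop out.
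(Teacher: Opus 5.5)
Your proposal is correct and follows the paper's proof essentially step for step. It uses the same differentiated recursion $v_k^{\ell}=v_k^{\ell-1}+\frac{\sigma_w^2}{L}\mathbb{E}[(g^{\ell-1}_{(k)})^2]$ with the cross term removed by the zero-mean, independent $W^\ell$, the same split of $g_{(k)}$ into $\phi' h_{(k)}$ plus lower-order Fa\`a di Bruno terms bounded via A1, Isserlis and Cauchy--Schwarz, and the same discrete Gr\"onwall induction on $k$, with your added observation that the induction hypothesis must hold uniformly in $\ell\le L$ (which the Gr\"onwall bound supplies at every intermediate layer) making explicit a point the paper leaves implicit.
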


\begin{proof}
Differentiating \eqref{eq:resnet} $k$ times gives
$h^{\lay}_{(k)} = h^{\lay-1}_{(k)} + \dep^{-1/2}W^{\lay}g^{\lay-1}_{(k)}$ for $k\ge1$.
Since $W^{\lay}$ has mean zero and is independent of layer $\lay-1$, the cross term
vanishes in expectation and
\begin{equation}
\label{eq:resrec}
v_k^{\lay} = v_k^{\lay-1} + \frac{\sigma_w^2}{\dep}\,
\Ex\!\left[\left(g^{\lay-1}_{(k)}\right)^2\right].
\end{equation}
Write $g_{(k)}=\phi' h_{(k)}+P_k$, where $P_k$ is a finite sum of products of lower-order
jet components with bounded activation derivatives. For $k=1$, \eqref{eq:resrec} and
boundedness of $\phi'$ give $v_1^{\lay}\le(1+B_1/\dep)v_1^{\lay-1}$ for a constant $B_1$.
Suppose the derivative variances below order $k$ are bounded. Under A1, their joint
Gaussian moments are then bounded by Isserlis' formula. Cauchy--Schwarz and Young's
inequality therefore give constants $B_k,M_k<\infty$, independent of $\lay$ and $\dep$,
such that
\begin{equation}
\sigma_w^2\Ex\!\left[\left(g_{(k)}^{\lay-1}\right)^2\right]
\le B_k v_k^{\lay-1}+M_k.
\end{equation}
Thus $v_k^{\lay}\le(1+B_k/\dep)v_k^{\lay-1}+M_k/\dep$. Discrete Gr\"onwall gives
\begin{equation}
\label{eq:gronwall}
v_k^{\dep}\le e^{B_k}\left(v_k^0+M_k\right),
\end{equation}
which closes the induction.
\end{proof}

The contrast with \eqref{eq:v2crit} and \eqref{eq:v3growth} is the point. In the plain
network the growth exponent rises with the derivative order, so a fourth-order problem
degrades faster with depth than a second-order one. Under \eqref{eq:resnet} the single
choice $\gamma = 1/2$ bounds each fixed finite order without retuning the exponent. The
constants need not be uniform in derivative order. The pathology is order-dependent; the
scaling exponent is not.

Figure~\ref{fig:residual} compares the residual prediction with finite-width simulations
through order four. The theorem itself holds at every fixed finite order under its stated
assumptions.

\section{Numerical method}
\label{sec:implementation}

\subsection{Exact jet propagation}
\label{sec:impl-jet}

We propagate input derivatives by the Fa\`a di Bruno expansion in
\eqref{eq:faa}, rather than by repeated automatic differentiation. For $\phi=\tanh$, its
first four derivatives are closed-form polynomials in $t=\tanh(z)$:
\begin{align}
\label{eq:tanhderivs}
\phi' &= 1-t^2, & \phi'' &= -2t(1-t^2), \nonumber\\
\phi''' &= -2(1-t^2)(1-3t^2), &
\phi'''' &= 8t(1-t^2)(2-3t^2).
\end{align}
No automatic differentiation is used anywhere, at any order, so the only floating-point
error is roundoff. All computations use double precision.

\subsection{An \texorpdfstring{$O(\NN)$}{O(n)} exact layer sampler}
\label{sec:impl-sampler}

Forming a width-$\NN$ weight matrix costs $O(\NN^2)$ per layer, but the layer update
requires only the products $Wg_{(j)}$. Conditional on the current jet, these products are
jointly Gaussian. For each neuron their covariance is
\begin{equation}
\label{eq:sampler}
\Ex[y_{(j),i}y_{(k),i}]
= \frac{\sigma_w^2}{\NN}\langle g_{(j)},g_{(k)}\rangle=:S_{jk},
\end{equation}
independently across neurons, because distinct rows of $W$ are independent. Drawing
independent samples from $\mathcal N(0,S)$ is therefore distributionally identical
to forming $W$ and multiplying. The resulting sampler costs
$O\!\left(\NN(K+1)^2\right)$ per layer.

This is an exact reformulation and not an approximation: finite-width fluctuations
propagate correctly, because $S$ is computed from the realized $g$ of the previous layer
rather than from its mean-field limit. It is what makes width $\widthMax$ reachable.

\subsection{Estimators}
\label{sec:impl-estimators}

Two choices of estimator matter enough to state, because the natural alternative in each
case destroys the structure being measured.

\paragraph{Polynomial degree by finite differences, not log-log slope.}
The growth laws in Section~\ref{sec:theory} have the form
$v_k^\lay = a + b\,\lay^{\,k-1}$ with $a>0$. The local log-log slope of such a sequence is
$b(k-1)\lay^{k-1}/(a+b\lay^{k-1})$, which approaches $k-1$ only as
$a/(b\lay^{k-1})\to0$ and therefore underestimates the degree at any finite depth. Fitted
slopes over $\lay\in[\fitLo,\depthMain]$ come out at $\loglogKtwo$, $\loglogKthree$ and
$\loglogKfour$ for $k=2,3,4$, against true degrees $1$, $2$ and $3$. We therefore test
degree by the statement that the $(k-1)$-th finite difference of $v_k^\lay$ is constant,
which involves no asymptotic limit.

\paragraph{Covariances centred, not differenced.}
Several quantities in Section~\ref{sec:theory} are departures of an expectation from its
factorized value, for instance
$\sigma_w^2\left(\Ex[\phi'^2h_{(k)}^2] - \Ex[\phi'^2]\,v_k\right)$.
Section~\ref{sec:numerics-assumptions} measures these departures as ratios lying within a
fraction of a percent of unity, so evaluating them as a difference of two separately
estimated means loses most of the significant digits to cancellation. We estimate them as
centred covariances,
$\sigma_w^2\,\overline{\left(\phi'^2-\overline{\phi'^2}\right)\left(h_{(k)}^2-\overline{h_{(k)}^2}\right)}$,
which is algebraically identical and numerically stable.

\subsection{Configuration}
\label{sec:impl-config}

Unless stated otherwise, measurements use width $\NN=\widthMain$, depth
$\dep=\depthMain$, and $\seedsMain$ independently sampled networks, with statistics
averaged over neurons and networks. Fits and depth-averaged quantities are taken over
$\lay\in[\fitLo,\depthMain]$, across all retained depths and across individual networks.

The lower limit excludes an initialization transient. The fixed-point form
\eqref{eq:identities} of Proposition~\ref{prop:identities} requires the preactivation
variance to be constant in $x$, and the first layers are furthest from that: at the input
layer $h^1_{(1)} = W^1$ while $h^1_{(0)} = W^1x + b^1$, so $\sigma_{01}\ne0$ there. The
measured $v_1^\lay$ falls from $\vOneAtOne$ at $\lay=1$ to $\vOneAtFour$ at $\lay=4$ before
flattening.

Discarding those layers is a matter of numerical conditioning rather than of validity.
Lemma~\ref{lem:transient} bounds the discrepancy by $\bar\lambda^{\lay}$ and
Theorem~\ref{thm:asymptotic} carries it through the recursions, so the growth laws do not
require the transient to have ended; the early layers are excluded because they are where
the correction is largest, not because the theory fails there. At $\lay=\fitLo$ the
correction is already of order $\lamTransientWindow$.

Section~\ref{sec:numerics} reports width convergence for each order, which is what
determines the range of $k$ over which the results are claimed.

\subsection{Computational assistance and reproducibility}
\label{sec:impl-assistance}

A large language model (Claude, Anthropic) was used as a computational aid, to expedite
symbolic manipulation in Section~\ref{sec:theory} and to implement the propagation and
sampling code described above. Every derivation reported in this paper was verified
independently by the authors by hand and cross-checked numerically against the measurements
of Section~\ref{sec:numerics}; no algebraic result is included on the basis of that
assistance alone. Every numerical claim is generated by committed code from recorded seeds
rather than transcribed. The archive named under Data availability contains that code
together with the raw outputs and the figure scripts, so each figure and each reported value
can be regenerated rather than taken on trust. Responsibility for all content rests with the
authors.

\section{Parity arguments}
\label{sec:method}

\subsection{Cross terms that parity does not eliminate}
\label{sec:method-parity}

Expanding $\Ex[g_{(j)}g_{(k)}]$ in \eqref{eq:sigmap} produces, alongside the squared terms,
cross terms carrying products of adjacent activation derivatives. For odd $\phi$ each such
product $\phi^{(m)}\phi^{(m+1)}$ is an odd function, and $h_{(0)}$ is symmetric, so
\begin{equation}
\label{eq:parity}
\Ex\!\left[\phi^{(m)}(h_{(0)})\,\phi^{(m+1)}(h_{(0)})\right] = 0 .
\end{equation}
The familiar next step is to conclude that the cross terms vanish. That step is not a
consequence of \eqref{eq:parity}. It is an application of A2, and it fails here.

A cross term has the form $\Ex[\phi^{(m)}\phi^{(m+1)}\,J]$ where $J$ is a product of jet
components. Factorizing it into $\Ex[\phi^{(m)}\phi^{(m+1)}]\,\Ex[J]$ requires the
activation factor to be independent of $J$. It is not: both depend on $h_{(0)}$, the
activation factor directly and the jets through the couplings $\sigma_{0k}$ of
Corollary~\ref{cor:indep}. A product of two mean-zero quantities has zero expectation only
when they are uncorrelated, and these are not.

The size of the resulting error is not marginal. In the order-3 expansion the term
$6\sigma_w^2\Ex[\phi''\phi'h_{(1)}h_{(2)}h_{(3)}]$, which \eqref{eq:parity} would eliminate,
contributes $\bcTerm$ at $\lay = \depthMain$ against a total forcing of $\bcTotal$, or
$\bcShare$ of it.

Two features of this failure are worth separating, because they explain why it is not
already known.

First, order one is immune. The order-1 recursion \eqref{eq:v1} involves only
$\chi_1 = \sigma_w^2\Ex[\phi'^2]$, a square, so no product of adjacent derivatives arises
and \eqref{eq:parity} is never invoked. A derivation that propagates the preactivation
variance and the input-output Jacobian, which is what the mean-field literature computes,
therefore cannot encounter the trap. It appears first at order two.

Second, the failure is invisible to the diagnostic one would naturally use. Assumption A2
is normally checked by comparing an expectation against its factorized value and asking
whether the ratio is one. For these terms the factorized value is exactly zero, so the
ratio is $0/0$ and carries no information. Every other term in \eqref{eq:v2} and
\eqref{eq:v3} passes such a check to within $\aTwoTol$ while these terms are wrong by
construction.

\section{Numerical results}
\label{sec:numerics}

Simulations use the configuration in Section~\ref{sec:implementation}. The reported
intervals are across independently sampled networks. They test the mean-field recursions,
not trained-network accuracy.

\subsection{One hypothesis tested, and one the derivation avoids}
\label{sec:numerics-assumptions}

A1 carries the derivation. Theorem~\ref{prop:gaussianjet} establishes it in the
infinite-width limit at fixed depth, which is a statement about a limit; what the
simulations can add is whether it is accurate at the widths actually used, so it is tested
directly. Skewness and excess
kurtosis of $h_{(1)}$, $h_{(2)}$ and $h_{(3)}$ are all below $\gaussTol$ in magnitude
throughout the measurement window. That $h_{(2)}$ and $h_{(3)}$ are Gaussian is not implied
by the standard mean-field statement, which concerns $h_{(0)}$ alone: both are built from
products through \eqref{eq:faa}. The three Isserlis identities the recursions use hold to
the same precision,
\begin{equation}
\label{eq:isserlis-checks}
\frac{\Ex[h_{(1)}^4]}{3v_1^2} = \isserlisA,
\qquad
\frac{\Ex[h_{(1)}^3h_{(3)}]}{3v_1c_{13}} = \isserlisB,
\qquad
\frac{\Ex[h_{(1)}^2h_{(2)}^2]}{v_1v_2 + 2c_{12}^2} = \isserlisC,
\end{equation}
and the fixed-point identity $\sigma_{01}=0$ of Proposition~\ref{prop:identities} is
measured at $\sigmaOneZero$.

Assumption A2 is a different matter, and the distinction is worth keeping explicit. No
result in Section~\ref{sec:theory} uses it: the recursions at orders two and three follow
from A1 together with the fixed-point identities of Proposition~\ref{prop:identities}, and
every expectation is evaluated by Lemma~\ref{lem:stein} rather than by factorization. A2
nonetheless holds numerically. The decoupling ratios
\begin{equation}
\label{eq:rho}
\rho_m := \frac{\Ex\!\left[\phi^{(m)2}J\right]}{\Ex\!\left[\phi^{(m)2}\right]\Ex[J]}
\end{equation}
for the three terms where they are defined measure $\rhoTwo$, $\rhoOneThree$ and
$\rhoThree$. We report these because their agreement is what makes
Section~\ref{sec:method-parity} precise: A2 is accurate wherever it can be tested, and the
terms it fails on are exactly those where $\Ex[J]=0$ leaves nothing to test.

\begin{figure}[t]
\centering
\includegraphics[width=\linewidth,height=0.78\textheight,keepaspectratio]{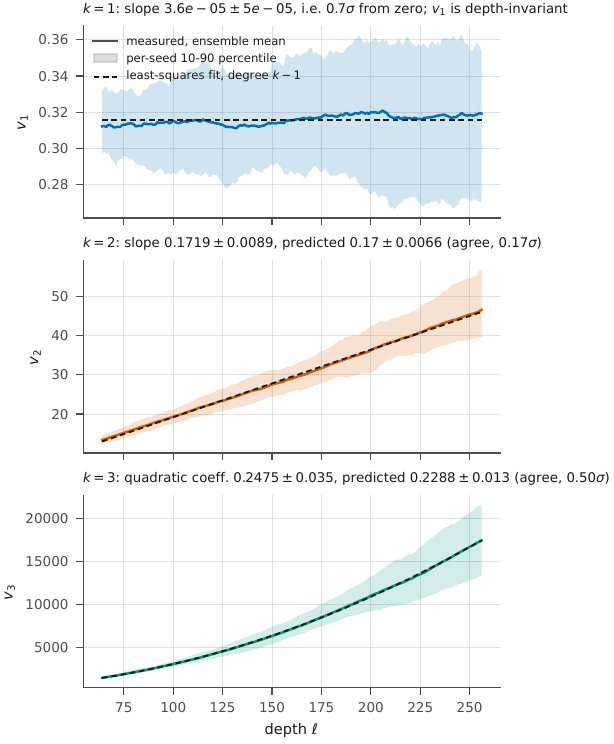}
\caption{Critical depth growth for derivative variances of orders one to three. Solid
curves are ensemble means, shaded bands show the 10th to 90th percentiles, and dashed
curves are the predicted polynomial forms. Width $\NN=\widthMain$, $\seedsMain$
independently sampled networks.}
\label{fig:growth}
\end{figure}

\subsection{Critical growth}
\label{sec:numerics-growth}

Figure~\ref{fig:growth} tests the recursions at $\chio=1$. The order-one slope is
$\slopeOne$, or $\slopeOneSig$ from zero, consistent with \eqref{eq:v1}. The order-two
slope is $\slopeTwo$, while \eqref{eq:v2} predicts $3\chi_2v_1^2=\predTwo$; their difference
is $\devTwo$. The measured order-three quadratic coefficient is $\coefThree$ and the
prediction from \eqref{eq:v3growth} is $\predThree$; their difference is $\devThree$.
Thus the data support depth-invariant first-order variance, linear second-order growth, and
quadratic third-order growth at the measured widths and depths.

\begin{figure}[t]
\centering
\includegraphics[width=\linewidth,height=0.78\textheight,keepaspectratio]{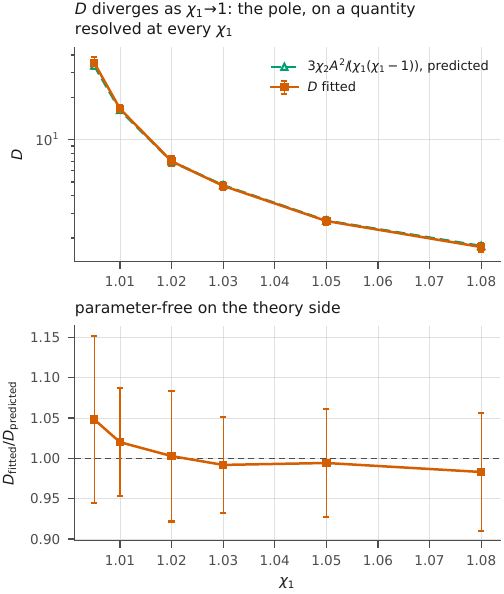}
\caption{The resonance, approached from above. Upper panel: the fitted coefficient $D$ of
$\chi_1^{2\ell}$ against the prediction $3\chi_2A^2/(\chi_1(\chi_1-1))$, which contains no
fitted quantity. Lower panel: their ratio, with intervals covering unity at every $\chi_1$.
The coefficient $C$ is not shown; it is resolved only adjacent to the pole, and the
divergence appears in $D$ alone. Width $\NN=\widthAux$, $\seedsAux$ independently sampled
networks at each $\chi_1$.}
\label{fig:resonance}
\end{figure}

\subsection{The resonance}
\label{sec:numerics-resonance}

Figure~\ref{fig:resonance} tests \eqref{eq:v2sol} above criticality. Fitting
$v_2^{\lay} = C\chio^{\lay} + D\chio^{2\lay}$ over the measurement window at each $\chio$
and comparing the fitted $D$ against $3\chi_2A^2/(\chio(\chio-1))$ gives a ratio within
$\dRatioTol$ of unity at every $\chio$ measured, with intervals covering unity throughout.
Nothing on the theory side of this comparison is fitted: $\chi_2$ follows from the
activation and the fixed point, and the amplitude $A$ is taken from $v_1$.

We do not report $C$. It is resolved only for $\poleValidRange$; beyond that
$\chio^{\lay}$ is negligible against $\chio^{2\lay}$ across the window, the two basis
functions cease to be separable, and the fitted $C$ is consistent with zero. $D$ suffices,
since the divergence appears in $D$ alone.

Below $\chio = 1$ the signals decay and the estimator is dominated by its noise floor, so
the pole is approached from above only.

\begin{figure}[t]
\centering
\includegraphics[width=\linewidth,height=0.78\textheight,keepaspectratio]{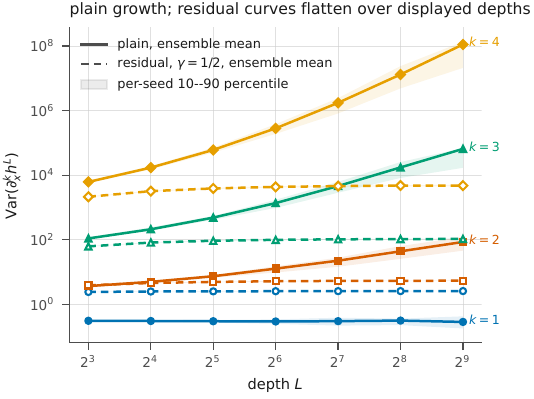}
\caption{Derivative variances for a plain network and a residual network with
$L^{-1/2}$ branch scale, orders one to four. Lines are ensemble means and shaded bands show
the 10th to 90th percentiles. Width $\NN=\widthAux$, $\seedsAux$ independently sampled
networks at each depth.}
\label{fig:residual}
\end{figure}

\subsection{Residual scaling}
\label{sec:numerics-residual}

Figure~\ref{fig:residual} compares the plain network with \eqref{eq:resnet} at
$\gamma=1/2$ from depth $8$ to $512$. In the plain network, the order-two, order-three,
and order-four variances increase by factors of $\plainGrowTwo$, $\plainGrowThree$, and
$\plainGrowFour$. Under residual scaling, the corresponding factors are $\resGrowTwo$,
$\resGrowThree$, and $\resGrowFour$. The result is consistent with the boundedness claim
in Proposition~\ref{prop:residual}. Order four is shown only as a boundedness check; this
paper does not claim an order-four mean-field growth law.

\begin{figure}[t]
\centering
\includegraphics[width=\linewidth,height=0.78\textheight,keepaspectratio]{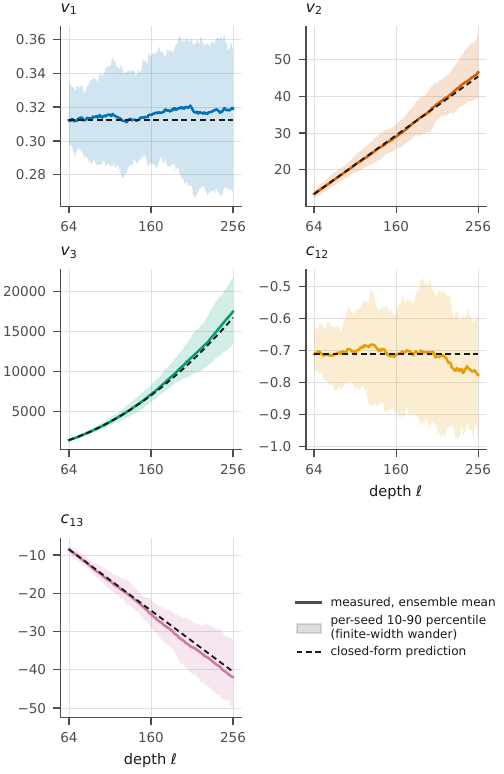}
\caption{Closed third-order recursion integrated from $\ell=64$. Solid curves are
ensemble measurements and dashed curves are predictions from
$(v_1,v_2,v_3,c_{12},c_{13})$ with no later fitted input. Width $\NN=\widthMain$,
$\seedsMain$ independently sampled networks; the tighter closure test quoted in the text
is a separate run.}
\label{fig:closure}
\end{figure}

\subsection{Closed recursion}
\label{sec:numerics-closure}

Figure~\ref{fig:closure} integrates \eqref{eq:v1}, \eqref{eq:v2}, \eqref{eq:v3}, and
\eqref{eq:offdiag} from $\lay=\fitLo$ using $\chio$, $\chi_2$, and $\chi_3$. In a separate
run at the same width $\NN=\widthMain$ with $\seedsClosure$ networks, all five ensemble
ratios are within $\closureBand$ of one. This test uses no measured quantity after the
initial condition.
The agreement supports the closed covariance system through third order at the measured
width and depth range.

\subsection{Width dependence of the residual}
\label{sec:numerics-width}

The closure residual is a finite-width effect. Across three widths at $\seedsClosure$
networks each, the largest deviation from unity falls from $\closMaxDevSmallN$ at
$\NN = 2^{14}$ to $\closMaxDevMidN$ at $2^{16}$, then changes little, reading
$\closMaxDevLargeN$ at $2^{18}$. Fitting a power law to the three widths gives exponents of
$\expVtwo$, $\expVthree$ and $\expCthirteen$ for the three quantities whose deviations are
resolved, with large fit residuals. We report these but do not treat them as a rate: the
$2^{16}$ and $2^{18}$ deviations sit at one to two standard errors, so the fit is
constrained by a single resolved step. The statement the data supports is that the residual
falls sharply with width and then plateaus near $\plateauBand$.

A plateau that does not move with width requires a source that is not finite width. One
candidate is ruled out at once. The finite-depth correction to the fixed-point identities,
bounded by Lemma~\ref{lem:transient} and carried through the recursions by
Theorem~\ref{thm:asymptotic}, has fallen to order $\lamTransientWindow$ by the first layer
of the window, which is seventeen orders of magnitude below the residual and cannot account
for it. A second candidate does account for its size. The integration uses $\chio$ evaluated by quadrature at
$q^\ast$, while a realized network has its own effective multiplier; a relative discrepancy
$\delta$ compounds over the window, so $\delta \sim 10^{-4}$ produces a deviation of a few
per cent that does not shrink with $\NN$. The measured per-layer discrepancy is
$\driftDelta$, which is the right size.

We do not regard this as established, for three reasons. The discrepancy is $\slopeOneSig$
from zero, so it is not itself resolved. The ordering of deviations across quantities
matches what compounding $\chio$ error would produce at $2^{14}$ and $2^{16}$ but not at
$2^{18}$. And $v_1$ and $c_{12}$ obey the same homogeneous recursion, so a common $\chio$
error must imply one value of $\delta$ from both; the two implied values differ by
$\driftMismatch$, which is the consistency test the explanation has to pass and it passes
weakly.

\section{Discussion}
\label{sec:discussion}

The practical reading of Proposition~\ref{prop:dichotomy} is that tuning cannot solve the
problem it describes. Choosing $(\sigma_w^2, \sigma_b^2)$ selects a point on the
one-parameter family of variance maps, and Section~\ref{sec:theory-growth} shows that the
point which stabilizes the first derivative is the point at which the second grows linearly.
Moving off criticality trades that linear growth for exponential behavior in the first
derivative, which is the failure the criterion was introduced to prevent. For a loss that
depends on second or higher derivatives there is no setting of the two variances that avoids
both.

That leaves the architecture as the only place to intervene, which is what
Proposition~\ref{prop:residual} formalizes. The relevant feature of \eqref{eq:resnet} is not
the skip connection itself but the attenuation of the branch: the factor $\dep^{-1/2}$ makes
each layer's contribution small enough that the accumulated effect over $\dep$ layers stays
bounded, and \eqref{eq:gronwall} holds at every order with the same exponent. An
architecture that attenuates layer contributions in some other way, or that begins near
shallow and deepens during training as the trainable branch scalar of
Wang et al.~\citep{wang2024pirate} does, would be expected to have the same effect for the
same reason.

Three directions remain outside the scope of what is established here. The analysis is of
the network at initialization, and the corresponding statement about training dynamics
requires following these quantities under gradient descent rather than at a single point.
The derivation assumes a smooth activation, and Proposition~\ref{prop:dichotomy} shows that
the non-smooth case is not a limit of it but a separate regime in which the second derivative
vanishes identically. And the input is scalar throughout; for a multidimensional input the
jet becomes a collection of mixed partial derivatives, and whether the closure of
Corollary~\ref{cor:closure} survives that generalization is not something the present
argument settles.

\section{Conclusions}
\label{sec:conclusions}

The covariance of the input derivatives of a wide scalar-input network obeys mean-field
recursions through third order, exact at the variance fixed point and with finite-depth
corrections that decay geometrically at a rate set by the variance map rather than by the
edge-of-chaos condition. The joint Gaussianity they use is not an extra hypothesis: it holds
in the infinite-width limit at each fixed depth. Their coefficients are the susceptibilities $\chio$, $\chi_2$ and $\chi_3$ together
with the fixed-point identities of Proposition~\ref{prop:identities}, which hold because the
variance fixed point does not depend on the input. Every remaining moment of the activation
cancels, so the five-quantity system closes on those three susceptibilities alone, with no
measured quantity carried along to keep it closed.

At the edge of chaos that closure has a definite shape. The condition $\chio=1$ is what
holds the first-derivative variance constant through depth, and it is the same condition
that makes the second-order recursion resonant: the two modes $\chio^{\lay}$ and
$\chio^{2\lay}$ of its solution coincide there, and their coincidence converts a bounded
solution into one growing linearly in depth whenever the activation has nonzero curvature.
The third-order variance grows quadratically. The two behaviors are therefore not
independent settings to be traded against one another but consequences of the same
condition, which is the content of Proposition~\ref{prop:dichotomy}. Simulations confirm the
growth exponents, the parameter-free amplitude of the resonance, the bound of
Proposition~\ref{prop:residual} and the closed recursion.

Two questions are left open. Conjecture~\ref{conj:closure} asserts that the closure persists
at every order, with coefficients polynomial in $\chi_1,\dots,\chi_k$ and no other moment of
the activation. The evidence is orders one through three, where each cancellation consumes
one of the fixed-point identities, and the general case is unproved. The order-four variance
is reported only as a boundedness check; no order-four growth law is claimed here.

%

\section*{CRediT authorship contribution statement}

\textbf{Prashant Singh}: Conceptualization, Formal analysis, Investigation, Methodology,
Software, Validation, Visualization, Writing -- original draft, Writing -- review and
editing. \textbf{Pranav Singh}: Conceptualization, Formal analysis, Investigation,
Methodology, Software, Validation, Visualization, Writing -- original draft, Writing --
review and editing. Both authors contributed equally to this work.

\section*{Declaration of competing interest}

The authors declare that they have no known competing financial interests or personal
relationships that could have appeared to influence the work reported in this paper.

\section*{Funding}

This research did not receive any specific grant from funding agencies in the public,
commercial, or not-for-profit sectors.

\section*{Data availability}
Code, the exact layer sampler, raw sweep outputs and the figure-generation scripts will be
made publicly available at \url{\zenodoURL} (\zenodoDOI) on publication. The archived
release will identify the source revision, software environment and random seeds needed to
regenerate each figure and reported number.

%
%

\section*{Declaration of Generative AI and AI-assisted technologies in the writing process}

During the preparation of this work the authors used Claude (Anthropic) in order to draft
and revise manuscript text and to assist with LaTeX preparation. After using this tool the
authors reviewed and edited the content as needed and take full responsibility for the
content of the published article.

\bibliographystyle{elsarticle-num}
\bibliography{refs}

\end{document}